\definecolor{myteal}{HTML}{2380D8}
\definecolor{myred}{HTML}{EA2222}
\definecolor{mypurple}{HTML}{9900CC}
\definecolor{mygrey}{HTML}{494949}
\definecolor{mygreen}{HTML}{008000}
\definecolor{myblack}{HTML}{000000}
\definecolor{mywhite}{HTML}{FFFFFF}
\definecolor{putblue}{RGB}{0,96,124}
\definecolor{putred}{RGB}{204,33,69}
\newcommand*{\eifstartswith}{\@expandtwoargs\ifstartswith}
\newcommand*{\ifstartswith}[2]{%
  \if\@car#1.\@nil\@car#2.\@nil
    \expandafter\@firstoftwo
  \else
    \expandafter\@secondoftwo
  \fi}
\theoremstyle{definition}
\newtheorem{definition}{Definition}
\newtheorem{lemma}{Lemma}
\newcommand{\Algo}[1]{\textsc{#1}}
\renewcommand{\vec}[1]{\boldsymbol{#1}}
\newcommand{\bx}{\vec{x}}
\newcommand{\by}{\vec{y}}
\newcommand{\bz}{\vec{z}}
\newcommand{\calX}{\mathcal{X}}
\newcommand{\calY}{\mathcal{Y}}
\newcommand{\calD}{\mathcal{D}}
\newcommand{\calQ}{\mathcal{Q}}
\newcommand{\calL}{\mathcal{L}}
\newcommand{\calH}{\mathcal{H}}
\newcommand{\calS}{\mathcal{S}}
\newcommand{\dataset}{{\cal D}}
\newcommand{\heta}{\hat{\eta}}
\newcommand{\hy}{\hat{y}}
\newcommand\R{\mathbb{R}}   
\newcommand{\pa}[1]{\mathrm{pa}(#1)}
\newcommand{\Path}[1]{\mathrm{Path}(#1)}
\newcommand{\lenpath}{\mathrm{len}}
\newcommand{\prob}{\mathbf{P}}
\newcommand{\childs}[1]{\mathrm{Ch}(#1)}
\renewcommand{\root}{r}
\newcommand{\nodes}{V}
\newcommand{\vertices}{V}
\newcommand{\tree}{T}
\newcommand{\leaves}{L}
\newcommand{\leafnode}{l}
\newcommand{\labels}{\calL}
\newcommand{\vertex}{v}
\newcommand{\node}{v}
\newcommand{\multilabel}{multi-label}
\algnewcommand{\IIf}[1]{\State\algorithmicif\ #1\ \algorithmicthen}
\algnewcommand{\IElse}[1]{\State\algorithmicelse\ #1\ }
\algnewcommand{\IElseIf}[1]{\State\algorithmicelse \algorithmicif\ #1\  \algorithmicthen}
\algnewcommand{\EndIIf}{\unskip\ \algorithmicend\ \algorithmicif}
\algnewcommand{\IfThen}[2]{\State\algorithmicif\ #1\ \algorithmicthen\ #2\ }
\algnewcommand{\ForDo}[2]{\State\algorithmicfor\ #1\ \algorithmicdo\ #2\ }
\newcommand{\assert}[1]{\llbracket #1 \rrbracket}
\newcommand{\given}{\, | \,}
\DeclareMathOperator*{\argmax}{\arg \max}
\newcommand{\CommentSS}[1]{{\scriptsize \Comment{#1}}}
\begin{document}

\runningtitle{Online probabilistic label trees}

%
\runningauthor{K. Jasinska-Kobus, M. Wydmuch, D. Thiruvenkatachari, K. Dembczy\'nski}


\twocolumn[
\aistatstitle{Online probabilistic label trees}
\aistatsauthor{Kalina Jasinska-Kobus$^*$ \\
ML Research at Allegro.pl, Poland \\
Poznan University of Technology, Poland \\
\nolinkurl{kjasinska@cs.put.poznan.pl}
\And
Marek Wydmuch$^*$ \\
Poznan University of Technology, Poland \\
\nolinkurl{mwydmuch@cs.put.poznan.pl}
\AND
Devanathan Thiruvenkatachari \\
Yahoo Research, New York, USA \\
\And
Krzysztof Dembczy\'nski \\
Yahoo Research, New York, USA \\
Poznan University of Technology, Poland \\

}
\aistatsaddress{\small$^*$ Equal contribution}
]


\begin{abstract}
We introduce \emph{online probabilistic label trees} (OPLTs),
an algorithm that trains a label tree classifier in a fully online manner
without any prior knowledge about the number of training instances, their features and labels.
\Algo{OPLT}s are characterized by low time and space complexity as well as strong theoretical guarantees.
They can be used for online multi-label and multi-class classification,
including the very challenging scenarios of one- or few-shot learning.
We demonstrate the attractiveness of \Algo{OPLT}s in a wide empirical study on several instances of the tasks mentioned above.
\end{abstract}


\section{Introduction}
\label{sec:introduction}


In modern machine learning applications, the label space can be enormous, containing even millions of different labels.
Problems of such scale are often referred to as \emph{extreme classification}.
Some notable examples of such problems are tagging of text documents~\citep{Dekel_Shamir_2010}, content annotation for multimedia search~\citep{Deng_et_al_2011}, and different types of recommendation, including webpages-to-ads~\citep{Beygelzimer_et_al_2009b}, ads-to-bid-words~\citep{Agrawal_et_al_2013,Prabhu_Varma_2014}, users-to-items~\citep{Weston_et_al_2013, Zhuo_et_al_2020}, queries-to-items~\citep{Medini_et_al_2019}, or items-to-queries~\citep{Chang_et_al_2020}.
In these practical applications, learning algorithms run in rapidly changing environments.
Hence, the space of labels and features might grow over time as new data points arrive. 
Retraining the model from scratch every time a new label is observed is computationally expensive, requires storing all previous data points, and introduces long retention before the model can predict new labels.
Therefore, it is desirable for algorithms operating in such a setting to work in an incremental fashion,
efficiently adapting to the growing label and feature space.

To tackle extreme classification problems efficiently, we consider a class of label tree algorithms that use a hierarchical structure of classifiers to reduce the computational complexity of training and prediction.
The tree nodes contain classifiers that direct the test examples from the root down to the leaf nodes, where each leaf corresponds to one label.
We focus on a subclass of label tree algorithms that uses probabilistic classifiers.
Examples of such algorithms for multi-class classification include hierarchical softmax (\Algo{HSM})~\citep{Morin_Bengio_2005},
implemented for example in \Algo{fastText}~\citep{Joulin_et_al_2016},
and conditional probability estimation tree (CPET)~\citep{Beygelzimer_et_al_2009b}.
The generalization of this idea to multi-label classification is known under the name of probabilistic label trees (\Algo{PLT}s)~\citep{Jasinska_et_al_2016}, and has been recently implemented in several state-of-the-art algorithms: \Algo{Parabel}~\citep{Prabhu_et_al_2018}, \Algo{extremeText}~\citep{Wydmuch_et_al_2018},
\Algo{Bonsai Tree}~\citep{Khandagale_et_al_2019},
and \Algo{AttentionXML}~\citep{You_et_al_2019}.
While some of the above algorithms use incremental procedures to train node classifiers,
only CPET allows for extending the model with new labels,
but it only works for multi-class classification.
For all the other algorithms, a label tree needs to be given before training of the node classifiers.

In this paper, we introduce \emph{online probabilistic label trees} (OPLTs), an algorithm for multi-class and multi-label problems,
which trains a label tree classifier in a \emph{fully} online manner.
This means that the algorithm does not require any prior knowledge about the number of training instances, their features and labels.
The tree is updated every time a new label arrives with a new example, in a similar manner as in CPET~\citep{Beygelzimer_et_al_2009b}, but the mechanism used there has been generalized to multi-label data.
Also, new features are added when they are observed. This can be achieved by feature hashing~\citep{Weinberger_et_al_2009} as in the popular Vowpal Wabbit package~\citep{Langford_et_al_2007}.
We rely, however, on a different technique based on recent advances in the implementation of hash maps, namely the Robin Hood hashing~\citep{Celis_et_al_1985}.

We require the model trained by \Algo{OPLT} to be equivalent to a model trained as a tree structure would be known from the very beginning.
In other words, the node classifiers should be exactly the same as the ones trained on the same sequence of training data using the same incremental learning algorithm, but with the tree produced by \Algo{OPLT} given as an input parameter before training them.
We refer to an algorithm satisfying this requirement as a \emph{proper} online \Algo{PLT}.
If the incremental tree can be built efficiently, then we additionally say that the algorithm is also \emph{efficient}.
These properties are important as a proper and efficient online \Algo{PLT} algorithm possesses similar guarantees as \Algo{PLT}s in terms of computational complexity~\citep{Busa-Fekete_et_al_2019} and statistical performance~\citep{Wydmuch_et_al_2018}.

To our best knowledge, the only algorithm that also addresses the problem of fully online learning in the extreme multi-class and multi-label setting is the recently introduced contextual memory tree (\Algo{CMT})~\citep{Sun_et_al_2019}, which is a specific online key-value structure that can be applied to a wide spectrum of online problems.
More precisely, \Algo{CMT} stores observed examples in the near-balanced binary tree structure that grows with each new example. The problem of mapping keys to values is converted into a collection of classification problems in the tree nodes, which predict which sub-tree contains the best value corresponding to the key.
\Algo{CMT} has been empirically proven to be useful for the few-shot learning setting in extreme multi-class classification, where it has been used directly as a classifier, and for extreme multi-label classification problems, where it has been used to augment an online one-versus-rest (OVR) algorithm.
In the experimental study, we compare \Algo{OPLT} with its offline counterparts and \Algo{CMT} on both extreme multi-label classification and few-shot multi-class classification tasks.

Some other existing extreme classification approaches can be tried to be used in the fully online setting,
but the adaptation is not straight-forward and there does not exist any such algorithm.
For example, the efficient OVR approaches (e.g., \Algo{DiSMEC}~\citep{Babbar_Scholkopf_2017},
\Algo{PPDSparse}~\citep{Yen_et_al_2017}, \Algo{ProXML}~\citep{Babbar_Scholkopf_2019}) work only in the batch mode.
Interestingly, one way of obtaining a fully online OVR is to use \Algo{OPLT} with a 1-level tree.
Popular decision-tree-based approaches, such as \Algo{FastXML}~\citep{Prabhu_Varma_2014}, also work in the batch mode.
An exception is \Algo{LOMtree}~\citep{Choromanska_Langford_2015}), which is an online algorithm.
It can be adapted to the fully online setting,
but as shown in~\citep{Sun_et_al_2019} its performance is worse than the one of \Algo{CMT}.
Recently, the idea of soft trees, closely related to hierarchical mixture of experts~\citep{Jordan_Jacobs_1994},
has gained increasing attention in deep learning community~\citep{Frosst_Hinton_2017, Kontschieder_et_al_2015, Hehn_et_al_2020}.
However, it has been used neither in the extreme nor in the fully online setting.


The paper is organized as follows. 
In Section~\ref{sec:xmlc} we define the problem of extreme multi-label classification (XMLC).
Section~\ref{sec:plt} recalls the \Algo{PLT} model.
Section~\ref{sec:oplt} introduces the \Algo{OPLT} algorithm, defines the desired properties
and shows that the introduced algorithm satisfies them.
Section~\ref{sec:experiments} presents experimental results.
The last section concludes the paper.

\section{Extreme multi-label classification}
\label{sec:xmlc}

Let $\calX$ denote an instance space and $\labels$
be a finite set of $m$ labels.
We assume that an instance $\bx \in \calX$ is associated with a subset of labels $\labels_{\bx} \subseteq \calL$
(the subset can be empty);
this subset is often called the set of relevant or positive labels,
while the complement $\labels \backslash \labels_{\bx}$ is considered as irrelevant or negative for $\bx$.
We identify the set $\labels_{\bx}$ of relevant labels with the binary vector $\by = (y_1,y_2, \ldots, y_m)$,
in which $y_j = 1 \Leftrightarrow j \in \labels_{\bx}$.
By $\calY = \{0, 1\}^m$ we denote the set of all possible label vectors.
We assume that observations $(\bx, \by)$ are generated independently and identically
according to a probability distribution $\prob(\bx, \by)$ defined on $\calX \times \calY$.
Notice that the above definition of multi-label classification includes multi-class classification
as a special case in which $\|\by\|_1=1$ ($\|\cdot\|$ denotes a vector norm).
In case of XMLC, we assume $m$ to be a large number 
but the size of the set of relevant labels $\labels_{\bx}$ is usually much smaller than $m$,
that is $|\labels_{\bx}| \ll m$.


\begin{algorithm*}[ht!]
\caption{\footnotesize \Algo{IPLT.Train}$(T, A_{\textrm{online}}, \calD)$}
\label{alg:plt-incremental-learning}
\begin{algorithmic}[1]
\footnotesize
\State ${H_T} = \emptyset$ \CommentSS{Initialize a set of node probabilistic classifiers}
\ForDo{$\vertex \in \vertices_{T}$}{$\heta(v) = \textsc{NewClassifier}()$, $H_T = H_T \cup \{\heta(v)\}$} \CommentSS{Initialize binary classifier for each node in the tree.}
\For{$i = 1 \to n$}  \CommentSS{For each observation in the training sequence}
    \State $(P, N) = \mathrm{\textsc{AssignToNodes}}(T, \bx_i, \calL_{\bx_i})$ \CommentSS{Compute its positive and negative nodes}
    \ForDo{$v \in P$}
        {$A_{\textrm{online}}\textsc{.Update}(\heta(v), (\bx_i, 1))$} \CommentSS{Update all positive nodes with a positive update with $\bx$.}
    \ForDo{$v \in N$}{$A_{\textrm{online}}\textsc{.Update}(\heta(v), (\bx_i, 0))$} \CommentSS{Update all negative nodes with a negative update with $\bx$.}
    \EndFor
\State \textbf{return} $H_T$ \CommentSS{Return the set of node probabilistic classifiers}
\end{algorithmic}
\end{algorithm*}%
\vspace{-5pt}


\section{Probabilistic label trees}
\label{sec:plt}

We recall the definition of probabilistic label trees (\Algo{PLT}s), introduced in~\citep{Jasinska_et_al_2016}.
\Algo{PLT}s follow a label-tree approach to efficiently solve the problem of estimation of marginal probabilities of labels in multi-label problems.
They reduce the original problem to a set of binary problems organized in the form of a rooted, leaf-labeled tree with $m$ leaves.
We denote a single tree by $\tree$, a root node by $\root_T$, and the set of leaves by $\leaves_T$.
The leaf $\leafnode_j \in \leaves_T$ corresponds to the label $j \in \labels$.
The set of leaves of a (sub)tree rooted in node $v$ is denoted by $\leaves_v$.
{The set of labels corresponding to all leaf nodes in $\leaves_v$ is denoted by $\calL_{v}$}.
The parent node of $v$ is denoted by $\pa{\node}$, and the set of child nodes by $\childs{\node}$.
The path from node $v$ to the root is denoted by $\Path{\node}$.
The length of the path is the number of nodes on the path, which is denoted by $\lenpath_\node$.
The set of all nodes is denoted by $\nodes_T$.
The degree of a node $\node \in \nodes_T$,  being the number of its children, is denoted by $\deg_\node=|\childs{\node}|$.

\Algo{PLT} uses tree $T$ to factorize conditional probabilities of labels,
$\eta_j(\bx) = \prob(y_j = 1 \vert \bx) = \prob(j \in \calL_{\bx} \vert \bx)$.
To this end let us define for every $\by$ a corresponding vector $\bz$ of length $|V_T|$,%
whose coordinates, indexed by $v \in V_T$, are given by:
\begin{equation}
z_v = \assert{\textstyle \sum_{\ell_j \in L_v} y_{j} \ge 1} \,.
\label{eqn:z}
\end{equation}
In other words, the element $z_v$ of $\bz$, corresponding to the node $v \in V_{\tree}$, is set to one iff $\by$ contains at least one label corresponding to leaves in $L_v$.
With the above definition, it holds for any {node} $v \in V_T$ that:
\begin{equation}
\eta_v(\bx) = \prob(z_v = 1 \given \bx) =  \prod_{v' \in \Path{v}} \eta(\bx, v') \,,
\label{eqn:plt_factorization}
\end{equation}
where $\eta(\bx, v) = \prob(z_v = 1 \vert z_{\pa{v}} = 1, \bx)$ for non-root nodes,
and $\eta(\bx, v) = \prob(z_v = 1 \given \bx)$ for the root~(see, e.g.,~\citet{Jasinska_et_al_2016}).
Notice that for leaf nodes we get the conditional probabilities of labels, i.e.,
\begin{equation}
\eta_{\leafnode_j}(\bx) = \eta_j(\bx) \,, \quad \textrm{for~} l_j \in L_T \,.
\label{eqn:plt_leaf_prob}
\end{equation}

For a given $T$ it suffices to estimate $\eta(\bx, v)$, for $v \in V_T$, to build a \Algo{PLT} model.
To this end one usually uses a function class $\calH : \R^d \mapsto [0,1]$ which %
contains probabilistic classifiers of choice, for example,
logistic regression.
We assign a classifier from $\calH$ to each node of the tree $T$. We index this %
set of classifiers by the elements of $V_T$ as $H = \{ \heta(v) \in \calH : v \in V_T \}$.
Training is performed usually on a dataset $\dataset = \{ (\bx_{i},\by_{i})\}_{i=1}^n$ consisting of $n$ tuples of feature vector $\bx_i \in \R^d$ and label vector $\by_i\in \{ 0,1\}^m$.
Because of factorization (\ref{eqn:plt_factorization}),
node classifiers can be trained as independent tasks.

The quality of the estimates $\heta_j(\bx)$, $j \in \calL$, can be expressed in terms of the $L_1$-estimation error in each node classifier, i.e., by $\left | \eta(\bx,v) - \heta(\bx, v) \right |$. \Algo{PLT}s obey the following bound~\citep{Wydmuch_et_al_2018}.
\begin{restatable}{theorem}{estimationregret}
\label{thm:estimation_regret}
For any tree $T$ and $\prob(\by \vert \bx)$ the following holds for $v \in V_T$:
$$
\left | \eta_j(\bx) - \heta_j(\bx) \right |
\leq \!\!\!\!\! \sum_{v' \in \Path{l_j}} \!\!\!\!\! \eta_{\pa{v'}}(\bx) \left | \eta(\bx,v')\!-\! \heta(\bx, v')  \right | \,,
$$
where for the root node $\eta_{\pa{r_T}}(\bx) = 1$.
\end{restatable}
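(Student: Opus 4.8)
The plan is to exploit the multiplicative factorization of both the true marginal $\eta_j(\bx)$ and its plug-in estimate $\heta_j(\bx)$ along the root-to-leaf path of $l_j$, and then expand the difference of the two products as a telescoping sum. Write $v_1, v_2, \dots, v_L$ for the nodes of $\Path{l_j}$ ordered from the root $v_1 = r_T$ down to the leaf $v_L = l_j$, where $L = \lenpath_{l_j}$, and abbreviate $a_k = \eta(\bx,v_k)$ and $b_k = \heta(\bx,v_k)$. By~\eqref{eqn:plt_factorization} and~\eqref{eqn:plt_leaf_prob} we have $\eta_j(\bx) = \prod_{k=1}^{L} a_k$, and, since the estimated marginal is assembled from the node estimates in exactly the same way, $\heta_j(\bx) = \prod_{k=1}^{L} b_k$.

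The first step is the elementary \emph{hybrid} identity
\[
\prod_{k=1}^{L} a_k - \prod_{k=1}^{L} b_k = \sum_{k=1}^{L} \Bigl(\prod_{i=1}^{k-1} a_i\Bigr)(a_k - b_k)\Bigl(\prod_{i=k+1}^{L} b_i\Bigr),
\]
which is verified by noting that the $k$-th summand equals $\prod_{i \le k} a_i \prod_{i>k} b_i - \prod_{i<k} a_i \prod_{i\ge k} b_i$, so consecutive terms cancel and the sum collapses to $\prod_k a_k - \prod_k b_k$ (empty products being $1$).

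Next I would take absolute values and apply the triangle inequality. Because every node classifier takes values in $[0,1]$, all the $a_i$ and $b_i$ lie in $[0,1]$, so each prefix and suffix product is nonnegative and the modulus distributes over each summand:
\[
\bigl|\eta_j(\bx) - \heta_j(\bx)\bigr| \le \sum_{k=1}^{L} \Bigl(\prod_{i=1}^{k-1} a_i\Bigr)\bigl|a_k - b_k\bigr|\Bigl(\prod_{i=k+1}^{L} b_i\Bigr).
\]
I then bound the suffix product $\prod_{i>k} b_i \le 1$ (again since $b_i \in [0,1]$) and identify the prefix product with a true marginal: $\prod_{i=1}^{k-1} a_i = \prod_{i=1}^{k-1}\eta(\bx,v_i) = \eta_{v_{k-1}}(\bx) = \eta_{\pa{v_k}}(\bx)$ by~\eqref{eqn:plt_factorization}, since $v_{k-1} = \pa{v_k}$ on the path; for $k=1$ the prefix is the empty product $1 = \eta_{\pa{r_T}}(\bx)$ by the stated convention. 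Substituting and reindexing the sum over $v' = v_k \in \Path{l_j}$ gives exactly the claimed inequality.

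I do not expect a genuine obstacle: this is a standard telescoping estimate. The two points requiring care are (i) recording explicitly that $\heta_j(\bx)$ is defined through the same product factorization as $\eta_j(\bx)$, so that the two products have matching factors, and (ii) choosing the telescoping so that the \emph{true} probabilities sit on the path prefix --- they are precisely what reassembles into the coefficient $\eta_{\pa{v'}}(\bx)$ --- while the \emph{estimated} probabilities sit on the suffix, where they can be harmlessly dropped using the $[0,1]$ bound; telescoping in the opposite order would produce a different (and not the desired) coefficient.
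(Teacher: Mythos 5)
Your proof is correct: the hybrid (telescoping) identity, the triangle inequality with all factors in $[0,1]$, and the identification of the prefix product with $\eta_{\pa{v'}}(\bx)$ via the factorization~\eqref{eqn:plt_factorization} give exactly the stated bound. Note that this paper does not actually prove Theorem~\ref{thm:estimation_regret} --- it is recalled from \citet{Wydmuch_et_al_2018} --- but your argument is the standard one and coincides with the proof given there (which runs by induction along the path from the root, i.e., your telescoping sum unrolled one step at a time).
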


Prediction for a test example $\bx$ relies on searching the tree.
For metrics such as precision@$k$, the optimal strategy is to predict $k$ labels with the highest marginal probability $\eta_j(\bx)$.
To this end, the prediction procedure traverses the tree using the uniform-cost search
to obtain the top $k$ estimates $\heta_j(\bx)$ (see Appendix~\ref{app:prediction} for the pseudocode).


\begin{algorithm*}[ht!]
\caption{\footnotesize \Algo{OPLT.Init}$()$}
\label{alg:oplt-init}
\begin{algorithmic}[1]
\footnotesize
\State $r_T = \textsc{NewNode()}$, $V_T = \{r_T\}$ \CommentSS{Create the root of the tree}
\State {$\heta(r_T) = \textsc{NewClassifier}()$, $H_T = \{\heta_T(r_T)\}$} \CommentSS{Initialize a new classifier in the root}
\State {$\hat{\theta}(r_T) = \textsc{NewClassifier}()$, $\Theta_T = \{\theta(r_T)\}$} \CommentSS{Initialize an auxiliary classifier in the root}
\end{algorithmic}
\end{algorithm*}

\begin{algorithm*}[ht!]
\caption{\footnotesize \Algo{OPLT.Train}$(\mathcal{S}, A_{\textrm{online}}, A_{\textrm{policy}})$}
\label{alg:oplt}
\begin{algorithmic}[1]
\footnotesize

\For{$(\bx_t, \calL_{\bx_t}) \in \calS$}  \CommentSS{For each observation in $\mathcal{S}$}
    \IfThen{ $\calL_{\bx_t} \setminus \calL_{t-1} \neq \emptyset$}{\textsc{UpdateTree}$(\bx_t, \calL_{\bx_t}, A_{\textrm{policy}})$} \CommentSS{If the obser. contains new labels, add them to the tree}
    \State \textsc{UpdateClassifiers}$(\bx_t, \calL_{\bx_t}, A_{\textrm{online}})$ \CommentSS {Update the classifiers}
    \State \textbf{send} $H_t, T_t = H_T, V_T$  \CommentSS{Send the node classifiers and the tree structure.}
\EndFor
\end{algorithmic}
\end{algorithm*}
\vspace{-5pt}

\begin{algorithm*}[ht]
\caption{\footnotesize \Algo{OPLT.UpdateTree}$(\bx, \calL_{\bx},  A_{\textrm{policy}})$}
\label{alg:oplt-update_tree}
\begin{algorithmic}[1]
\footnotesize
\For{ $j \in \labels_{\bx} \setminus \calL_{t-1}$}\CommentSS{For each new label in the observation}
    \If{$\calL_T$ is $\emptyset$}
    $\textsc{label}(r_T) = j$ \CommentSS{If no labels have been seen so far, assign label $j$ to the root node}
    \Else \CommentSS{If there are already labels in the tree.}
        \State {$v,\ {insert} = A_{\textrm{policy}}(\bx, j, \calL_{\bx})$} \CommentSS{Select a variant of extending the tree}
        \IfThen{${insert}$}{$\textsc{InsertNode}(v)$} \CommentSS{Insert an additional node if needed.}
        \State{$\textsc{AddLeaf}(j, v)$}  \CommentSS{Add a new leaf for label $j$.}
    \EndIf
\EndFor
\end{algorithmic}
\end{algorithm*}

\begin{algorithm*}[t]
\caption{\footnotesize \Algo{OPLT.InsertNode}$(v)$}
\label{alg:oplt-insert_node}
\begin{algorithmic}[1]
\footnotesize
    \State{$v' = \textsc{NewNode}$(), $V_T = V_T \cup \{v'\}$} \CommentSS{Create a new node and add it to the tree nodes}
    \If{$\textsc{IsLeaf}(v)$}
        $\textsc{Label}(v') = \textsc{Label}(v)$, $\textsc{Label}(v) = \textsc{Null}$ \CommentSS{If node $v$ is a leaf reassign label of $v$ to $v'$}
    \Else \CommentSS{Otherwise}
        \State{$\childs{v'} = \childs{v}$}  \CommentSS{All children of $v$ become children  of $v'$}
        \ForDo{$v_{\textrm{ch}} \in \childs{v'}$}{$\pa{v_{\textrm{ch}}} = v'$} \CommentSS{And $v'$ becomes their parent}
    \EndIf
    \State{$\childs{v} = \{v'\}$, $\pa{v'} = v$} \CommentSS{The new node $v'$ becomes the only child of $v$}
    \State{$\heta(v') = \textsc{Copy}(\hat\theta(v))$, $H_T = H_T \cup \{\heta(v')\}$} \CommentSS{Create a classifier.}
    \State{$\hat{\theta}(v') = \textsc{Copy}(\hat\theta(v))$, $\Theta_T = \Theta_T \cup \{\hat\theta(v')\}$} \CommentSS{And an auxiliary classifier.}
\end{algorithmic}
\end{algorithm*}
\vspace{-5pt}


\section{Online probabilistic label trees}
\label{sec:oplt}


A \Algo{PLT} model can be trained incrementally,
on observations from $\calD = \{(\bx_i, \by_i) \}_{i=1}^n$,
using an incremental learning algorithm $A_{\textrm{online}}$ for updating the tree nodes.
Such \emph{incremental} \Algo{PLT} (\Algo{IPLT}) is given in Algorithm~\ref{alg:plt-incremental-learning}.
In each iteration, it first identifies the set of \emph{positive} and \emph{negative nodes} using the \Algo{AssignToNodes} procedure
(see Appendix~\ref{app:training} for the pseudocode and description).
The positive nodes are those for which the current training example is treated as positive (i.e, $(\bx, z_v = 1)$),
while the negative nodes are those for which the example is treated as negative (i.e., $(\bx, z_v = 0)$).
Next, \Algo{IPLT} appropriately updates classifiers in the identified nodes.
Unfortunately, the incremental training in \Algo{IPLT} requires the tree structure $T$ to be given in advance.

To construct a tree, at least the number $m$ of labels needs to be known.
More advanced tree construction procedures exploit additional information like feature values or label co-occurrence~\citep{Prabhu_et_al_2018, Khandagale_et_al_2019}.
In all such algorithms, the tree is built prior to the learning of node classifiers.
Here, we analyze a different scenario in which
an algorithm operates on a possibly infinite sequence of training instances,
and the tree is constructed online, simultaneously with incremental training of node classifiers,
without any prior knowledge of the set of labels or training data.

Let us denote a sequence of observations by $\calS = \{(\bx_i, \calL_{\bx_i})\}_{i=1}^\infty$
and a subsequence consisting of the first $t$ instances by $\calS_t$.
We use here $\calL_{\bx_i}$ instead of $\by_i$
as the number of labels $m$, which is also the length of $\by_i$, increases over time in this online scenario.%
\footnote{The same applies to $\bx_i$ as the number of features also increases.
However, we keep the vector notation in this case, as it does not impact the algorithm's description.}
Furthermore, let the set of labels observed in $\calS_t$ be denoted by $\calL_t$,
with $\calL_0 = \emptyset$.
An online algorithm returns at step $t$ a tree structure $T_t$
constructed over labels in $\calL_t$ and a set of node classifiers $H_t$.
Notice that the tree structure and the set of classifiers change in each iteration
in which one or more new labels are observed.
Below we discuss two properties that are desired for such online algorithms,
defined in relation to the \Algo{IPLT} algorithm given above.
\begin{definition}[A proper online \Algo{PLT} algorithm]
\label{oplt-proper}
Let $T_t$ and $H_t$ be respectively a tree structure and a set of node classifiers
trained on a sequence $\calS_t$ using an online algorithm $B$.
We say that $B$ is a \emph{proper online \Algo{PLT} algorithm},
when for any $\calS$ and $t$ we have that
\begin{itemize}
\item $l_j \in L_{T_t}$ iff $j \in \labels_t$, i.e.,
leaves of $T_t$ correspond to all labels observed in $S_t$,
\item and $H_t$ is exactly the same as $H = \textsc{IPLT.Train}(T_t, A_{\textrm{online}}, \mathcal{S}_t)$,
i.e., node classifiers from $H_t$ are the same
as the ones trained incrementally by Algorithm~\ref{alg:plt-incremental-learning}
on $\calD = \calS_t$ and tree $T_t$ given as input parameter.
\end{itemize}
\end{definition}
\noindent
In other words, we require that whatever tree the online algorithm produces,
the node classifiers should be trained in the same way
as if the tree was known from the very beginning of training.
Thanks to that, we can control the quality of each node classifier,
as we are not missing any update.
Since the model produced by a proper online \Algo{PLT} is the same as of \Algo{IPLT},
the same statistical guarantees apply to both of them.


The above definition can be satisfied by a naive algorithm that stores all observations seen so far, uses them in each iteration to build a tree and train node classifiers with the \Algo{IPLT} algorithm from scratch. This approach is costly. Therefore, we also demand an online algorithm to be space and time-efficient in the following sense.
\begin{definition}[An efficient online \Algo{PLT} algorithm]
\label{oplt-efficient}
Let $T_t$ and $H_t$ be respectively a tree structure and a set of node classifiers
trained on a sequence $\calS_t$ using an online algorithm $B$.
Let $C_s$ and $C_t$ be the space and time training cost of \Algo{IPLT}
trained on sequence $\calS_t$ and tree $T_t$.
An online algorithm is an \emph{efficient online \Algo{PLT} algorithm}
when for any $S$ and $t$ we have its space and time complexity to be in constant factor of $C_s$ and $C_t$, respectively.
\end{definition}
In this definition, we abstract from the actual implementation of \Algo{IPLT}.
In other words, the complexity of an efficient online \Algo{PLT} algorithm
depends directly on design choices for~\Algo{IPLT}.
The space complexity is upperbounded by $2m-1$ (i.e., the maximum number of node models),
but it also depends on the chosen type of node models and the way of storing them.
Let us also notice that the definition implies
that the update of a tree structure has to be in a constant factor of the training cost of a single instance. 

\subsection{Online tree building and training of node classifiers}
\label{sec:oplt-build-and-train}

Below we describe an online \Algo{PLT} algorithm that,
as we show in subsection~\ref{sec:oplt-theory},
satisfies both properties defined above.
It is similar to \Algo{CPET}~\citep{Beygelzimer_et_al_2009b},
but extends it to \multilabel{} problems
and trees of any shape. 
We refer to this algorithm as \Algo{OPLT}.

The pseudocode is presented in Algorithms~\ref{alg:oplt-init}-\ref{alg:oplt-update_classifiers}.
In a nutshell, \Algo{OPLT} proceeds observations from $\calS$ sequentially,
updating node classifiers.
For new incoming labels, it creates new nodes according to a chosen tree building policy
which is responsible for the main logic of the algorithm.
Each new node $v$ is associated with two classifiers,
a regular one $\heta(v) \in H_T$,
and an \emph{auxiliary} one $\hat\theta(v) \in \Theta_T$,
where $H_T$ and $\Theta_T$ denote the corresponding sets of node classifiers.
The task of the auxiliary classifiers is to accumulate positive updates.
The algorithm uses them later to initialize classifiers associated with new nodes added to a tree.
They can be removed if a given node will not be used anymore to extend the tree.
A particular criterion for removing an auxiliary classifier depends, however, on a tree building policy.

The algorithm starts with \Algo{OPLT.Init} procedure, presented in Algorithm~\ref{alg:oplt-init}, that initializes a tree with only a root node $v_{r_T}$ and corresponding classifiers, $\heta(v_{r_T})$ and $\hat\theta(v_{r_T})$.
Notice that the root has both classifiers initialized from the very beginning without a label assigned to it.
Thanks to this, the algorithm can properly estimate the probability of $\prob(\by = \vec{0} \given \bx)$.
From now on, \Algo{OPLT.Train}, outlined in Algorithm~\ref{alg:oplt}, administrates the entire process. In its main loop, observations from $\calS$ are proceeded sequentially. If a new observation contains one or more new labels
then the tree structure is appropriately extended by calling \Algo{UpdateTree}.
The node classifiers are updated in \Algo{UpdateClassifiers}.
After each iteration $t$, the algorithm sends $H_T$ along with the tree structure $T$,
respectively as $H_t$ and $T_t$, to be used outside the algorithm for prediction tasks.
We assume that tree $T$ along with sets of its all nodes $V_T$ and leaves $L_T$,
as well as sets of classifiers $H_T$ and $\Theta_T$,
are accessible to all the algorithms discussed below.

Algorithm~\ref{alg:oplt-update_tree}, \Algo{UpdateTree}, builds the tree structure.
It iterates over all new labels from $\calL_{\bx}$.
If there were no labels in the sequence $\calS$ before,
the first new label taken from $\calL_{\bx}$ is assigned to the root node.
Otherwise, the tree needs to be extended by one or two nodes according to a selected tree building policy.
One of these nodes is a leaf to which the new label will be assigned.
There are in general three variants of performing this step
illustrated in Figure~\ref{fig:oplt-tree_building}.
The first one relies on selecting an internal node $v$
whose number of children is lower than the accepted maximum,
and adding to it a child node $v''$ with the new label assigned to it.
In the second one, two new child nodes, $v'$ and $v''$, are added to a selected internal node $v$.
Node $v'$ becomes a new parent of child nodes of the selected node $v$,
i.e., the subtree of $v$ is moved down by one level.
Node $v''$ is a leaf with the new label assigned to it.
The third variant is a modification of the second one.
The difference is that the selected node $v$ is a leaf node.
Therefore there are no children nodes to be moved to $v'$,
but label of $v$ is reassigned to $v'$.
The $A_{\textrm{policy}}$ method encodes the tree building policy,
i.e., it decides which of the three variants to follow and selects the node $v$.
The additional node $v'$ is inserted by the \Algo{InsertNode} method.
Finally, a leaf node is added by the \Algo{AddLeaf} method.
We discuss the three methods in more detail below.

\begin{algorithm*}[t]
\caption{\footnotesize \Algo{OPLT.AddLeaf}$(j, v)$}
\label{alg:oplt-add_leaf}
\begin{algorithmic}[1]
\footnotesize
    \State $v'' = \textsc{NewNode()}$, $V_T = V_T \cup \{v''\}$ \CommentSS{Create a new node and add it to the tree nodes}
    \State $\childs{v} = \childs{v} \cup \{v''\}$,  $\pa{v''} = v$, $\textsc{label}(v'') = j$ \CommentSS{Add this node to children of $v$ and assign label $j$ to the node $v''$}
    \State {$\heta(v'') = \textsc{InverseClassifier}(\hat\theta(v))$, $H_T = H_T \cup \{ \heta(v'')\} $} \CommentSS{Initialize a classifier for $v''$}
    \State{$\hat{\theta}(v'') = \textsc{NewClassifier}()$, $\Theta_T = \Theta_T \cup \{ \hat{\theta}(v'')\}$} \CommentSS{Initialize an auxiliary classifier for $v''$}
\end{algorithmic}
\end{algorithm*}

\begin{figure*}[h]
\begin{tabular}{p{0.225\textwidth}p{0.225\textwidth}p{0.225\textwidth}p{0.225\textwidth}}
\scalebox{0.8}{
\begin{tikzpicture}[scale = 1,every node/.style={scale=1},
		regnode/.style={circle,draw,minimum width=1.5ex,inner sep=0pt},
		leaf/.style={circle,fill=black,draw,minimum width=1.5ex,inner sep=0pt},
		pleaf/.style={rectangle,rounded corners=1ex,draw,font=\scriptsize,inner sep=3pt},
		pnode/.style={rectangle,rounded corners=1ex,draw,font=\scriptsize,inner sep=3pt},
		rootnode/.style={rectangle,rounded corners=1ex,draw,font=\scriptsize,inner sep=3pt},
		level/.style={sibling distance=6em/#1, level distance=4ex}
	]
	\node (z) [rootnode,] {\color{white}{$v$}}
    child {node [fill=white!10!white!90] (a) [pnode] {$v_1$}
    		child {node [fill=white!10!white!90, label=below:{\makebox[1cm][c]{$y_1$}}] (a1) [pnode] {$v_2$}
    		edge from parent node[above left]{}
    }
    child {node [label=below:{\makebox[1cm][c]{$y_2$}},fill=white!10!white!90] (g) [pleaf] {$v_3$} edge from parent node[above]{}}
    		edge from parent node[above left]{}
    }
	child {node (j) [pnode] {\color{white}{$v$}}
		child {node [label=below:{\makebox[1cm][c]{$y_2$}}] (k) [pleaf] {\color{white}{$v$}} edge from parent node[above left]{}}
		child {node [label=below:{\makebox[1cm][c]{$y_3$}}] (l) [pleaf] {\color{white}{$v$}}
			{
				child [grow=right] {node (s) {} edge from parent[draw=none]
					child [grow=up] {node (t) {} edge from parent[draw=none]
						child [grow=up] {node (u) {} edge from parent[draw=none]}
					}
				}
			}
			edge from parent node[above right]{}
		}
		edge from parent node[above right]{}
	};
\end{tikzpicture}}
&
\scalebox{0.8}{
\begin{tikzpicture}[scale = 1,every node/.style={scale=1},
    		regnode/.style={circle,draw,minimum width=1.5ex,inner sep=0pt},
    		leaf/.style={circle,fill=black,draw,minimum width=1.5ex,inner sep=0pt},
    		pleaf/.style={rectangle,rounded corners=1ex,draw,font=\scriptsize,inner sep=3pt},
    		pnode/.style={rectangle,rounded corners=1ex,draw,font=\scriptsize,inner sep=3pt},
    		rootnode/.style={rectangle,rounded corners=1ex,draw,font=\scriptsize,inner sep=3pt},
    		level/.style={sibling distance=6em/#1, level distance=4ex},
    		level 3/.style={sibling distance=3em, level distance=4ex}
    	]
    	\node (z) [rootnode] {\color{white}{r}}
    	child {node [fill=black!30!white!70] (a) [pnode] {$v_1$}
    		child {node [fill=white!10!white!90, label=below:{\makebox[1cm][c]{$y_1$}}] (a1) [pleaf] {$v_2$}
    		edge from parent node[above left]{}
    	    }
    	    child {node [fill=white!10!white!90,
    	    label=below:{\makebox[1cm][c]{$y_2$}}] (g) [pleaf] {$v_3$} edge from parent node[above]{}
    	    }
    		child {node [fill=black!10!white!90, label=below:{\makebox[1cm][c]{$y_5$}}] (a2) [pleaf] {$v''_1$}
    		edge from parent node[above right]{}
    	    }
    	    edge from parent node[above left]{}
        }
    	child {node (j) [pnode] {\color{white}{$v$}}
    		child {node [label=below:{\makebox[1cm][c]{$y_2$}}] (k) [pleaf]  {\color{white}{$v$}} edge from parent node[above left]{}}
    		child {node [label=below:{\makebox[1cm][c]{$y_3$}}] (l) [pleaf]  {\color{white}{$v$}} edge from parent node[above right]{}
    		}
    		edge from parent node[above right]{}
    	};
\end{tikzpicture}}
&
\scalebox{0.75}{
\begin{tikzpicture}[scale = 1,every node/.style={scale=1},
		regnode/.style={circle,draw,minimum width=1.5ex,inner sep=0pt},
		leaf/.style={circle,fill=black,draw,minimum width=1.5ex,inner sep=0pt},
		pleaf/.style={rectangle,rounded corners=1ex,draw,font=\scriptsize,inner sep=3pt},
		pnode/.style={rectangle,rounded corners=1ex,draw,font=\scriptsize,inner sep=3pt},
		rootnode/.style={rectangle,rounded corners=1ex,draw,font=\scriptsize,inner sep=3pt},
		level/.style={sibling distance=6em/#1, level distance=4ex}
	]
	\node (z) [rootnode] {\color{white}{$v$}}
	child {node [fill=black!30!white!70] (a) [pnode] {$v_1$}
	    child {node [fill=black!10!white!90] (a1) [pnode] {$v'_1$}
	        child{node [fill=white!10!white!90, label=below:{\makebox[1cm][c]{$y_1$}}] (a11) [pleaf] {$v_2$}
	        edge from parent node[above left]{}
	        }
	        child{node [fill=white!10!white!90, label=below:{\makebox[1cm][c]{$y_2$}}] (a12) [pleaf] {$v_3$}
	        edge from parent node[above right]{}
	        }
    	edge from parent node[above left]{}
    	}
    	child {node [fill=black!10!white!90,
    	label=below:{\makebox[1cm][c]{$y_5$}}] (g) [pleaf] {$v''_1$}
    	edge from parent node[above]{}
        }
		edge from parent node[above left]{}
	}
	child {node (j) [pnode] {\color{white}{$v$}}
		child {node [label=below:{\makebox[1cm][c]{$y_2$}}] (k) [pleaf] {\color{white}{$v$}} edge from parent node[above left]{}}
		child {node [label=below:{\makebox[1cm][c]{$y_3$}}] (l) [pleaf] {\color{white}{$v$}}
			{
				child [grow=right] {node (s) {} edge from parent[draw=none]
					child [grow=up] {node (t) {} edge from parent[draw=none]
						child [grow=up] {node (u) {} edge from parent[draw=none]}
					}
				}
			}
			edge from parent node[above right]{}
		}
		edge from parent node[above right]{}
	};
\end{tikzpicture}}
&
\scalebox{0.8}{
\begin{tikzpicture}[scale = 1,every node/.style={scale=1},
    		regnode/.style={circle,draw,minimum width=1.5ex,inner sep=0pt},
    		leaf/.style={circle,fill=black,draw,minimum width=1.5ex,inner sep=0pt},
    		pleaf/.style={rectangle,rounded corners=1ex,draw,font=\scriptsize,inner sep=3pt},
    		pnode/.style={rectangle,rounded corners=1ex,draw,font=\scriptsize,inner sep=3pt},
    		rootnode/.style={rectangle,rounded corners=1ex,draw,font=\scriptsize,inner sep=3pt},
    		level/.style={sibling distance=6em/#1, level distance=4ex},
    		level 3/.style={sibling distance=3em, level distance=4ex}
    	]
	\node (z) [rootnode] {\color{white}{$v$}}
	child {node [fill=white!10!white!90] (a) [pnode] {$v_1$}
	    child {node [fill=black!30!white!70] (a1) [pnode] {$v_2$}
	        child{node [fill=black!10!white!90, label=below:{\makebox[1cm][c]{$y_1$}}] (a11) [pleaf] {$v'_2$}
	        edge from parent node[above left]{}
	        }
	        child{node [fill=black!10!white!90, label=below:{\makebox[1cm][c]{$y_5$}}] (a12) [pleaf] {$v''_2$}
	        edge from parent node[above right]{}
	        }
    	edge from parent node[above left]{}
    	}
    	child {node [fill=white!10!white!90,
    	label=below:{\makebox[1cm][c]{$y_2$}}] (g) [pleaf] {$v_3$}
    	edge from parent node[above]{}
        }
		edge from parent node[above left]{}
	}
	child {node (j) [pnode] {\color{white}{$v$}}
		child {node [label=below:{\makebox[1cm][c]{$y_2$}}] (k) [pleaf] {\color{white}{$v$}} edge from parent node[above left]{}}
		child {node [label=below:{\makebox[1cm][c]{$y_3$}}] (l) [pleaf] {\color{white}{$v$}}
			{
				child [grow=right] {node (s) {} edge from parent[draw=none]
					child [grow=up] {node (t) {} edge from parent[draw=none]
						child [grow=up] {node (u) {} edge from parent[draw=none]}
					}
				}
			}
			edge from parent node[above right]{}
		}
		edge from parent node[above right]{}
    	};
\end{tikzpicture}} \\

{\footnotesize (a) Tree $T_{t-1}$ after $t\!-\!1$ iterations.}
&
{\footnotesize (b) Variant 1: A leaf node $v_1''$ for label $j$ added as a child of an internal node $v_1$.}
&
{\footnotesize (c) Variant 2: A leaf node $v_1''$ for label $j$ and an internal node $v_1'$ (with all children of $v_1$ reassigned to it) added as children of $v_1$.}
&
{\footnotesize (d) Variant 3: A leaf node $v_2''$ for label $j$ and a leaf node $v_2'$ (with a reassigned label of $v_2$) added as children of $v_2$.} \\
\end{tabular}
\caption{%
Three variants of tree extension for a new label $j$. %
}
\label{fig:oplt-tree_building}
\vspace{-5pt}
\end{figure*}

\begin{algorithm*}[t]
\caption{\footnotesize \Algo{OPLT.UpdateClassifiers}$(\bx, \calL_{\bx}, A_{\textrm{online}})$}
\label{alg:oplt-update_classifiers}
\begin{algorithmic}[1]
\footnotesize
    \State $(P, N) = \textsc{AssignToNodes}(T, \bx, \calL_{\bx})$ \CommentSS{Compute its positive and negative nodes}
    \For{$v \in P$} \CommentSS{For all positive nodes}
         \State $A_{\textrm{online}}\textsc{.Update}(\heta(v), (\bx, 1))$ \CommentSS{Update classifiers with a positive update with $\bx$.}
        \IfThen{$\hat{\theta}(v) \in \Theta$}{$A_{\textrm{online}}\textsc{.Update}(\hat \theta(v), (\bx, 1))$}
            \CommentSS{If aux. classifier exists, update it with a positive update with $\bx_i$.}
    \EndFor
    \ForDo{$v \in N$}{$A_{\textrm{online}}\textsc{.Update}(\heta(v), (\bx, 0))$} \CommentSS{Update all negative nodes with a negative update with $\bx$.}
\end{algorithmic}
\end{algorithm*}
\begin{algorithm*}[t]
\caption{\footnotesize \Algo{Random} and \Algo{Best-greedy} $A_{\textrm{policy}}(\bx, j, \calL_{\bx})$}
\label{alg:oplt-policy}
\begin{algorithmic}[1]
\footnotesize
\If{\Algo{RunFirstFor}$(\bx)$} \CommentSS{If the algorithm is run for the first time for the current observation $\bx$}
    \State $v = r_T$ \CommentSS{Set current node $v$ to root node}
    \While {$\childs{v} \nsubseteq L_T \land \childs{v} = b $} \CommentSS{While the node's children are not only leaf nodes and arity is equal to $b$}
        \If {\Algo{Random} policy}
            $v$ = $\textsc{SelectRandomly}(\childs{v})$ \CommentSS{If \Algo{Random} policy, randomly choose child node}
        \ElsIf{\Algo{Best-greedy} policy} \CommentSS{In the case of \Algo{Best-greedy} policy}
            \State $v = \argmax_{v' \in \childs{v}} (1 - \alpha)\heta(\bx, v') + \alpha {|L_{v'}|}^{-1}
            \left ( \log{|L_{v}|} - \log{ |\childs{v}|} \right )$ \CommentSS{Select child node with the best score}
        \EndIf
    \EndWhile
\Else~$v = $\Algo{GetSelectedNode()} \CommentSS{If the same $\bx$ is observed as the last time, select the node used previously}
\EndIf
\IfThen{$|\childs{v} \cap L_T| = 1$}{$v = v' \in \childs{v}: v' \in L_T$} \CommentSS{If node $v$ has only one leaf, change the selected node to this leaf}
\State \Algo{SaveSelectedNode}$(v)$ \CommentSS{Save the selected node $v$}
\State \Return{($v$, $|\childs{v}| = b_{\max} \lor v \subseteq L_T$)} \CommentSS{\scriptsize Return node $v$, if num. of $v$'s children reached the max. or $v$ is a leaf, insert a new node.}
\end{algorithmic}
\end{algorithm*}
\vspace{-5pt}

$A_{\textrm{policy}}$ returns the selected node $v$ and
a Boolean variable $insert$, which indicates whether an additional node $v'$ has to be added to the tree.
For the first variant, $v$ is an internal node, and $insert$ is set to false.
For the second variant, $v$ is an internal node, and $insert$ is set to true.
For the third variant, $v$ is a leaf node, and $insert$ is set to true.
In general, the policy can be as simple as selecting a random node or a node based on the current tree size to construct a complete tree.
It can also be much more complex, guided in general by $\bx$, current label $j$, and set $\calL_{\bx}$ of all labels of $\bx$.
Nevertheless, as mentioned before, the complexity of this step should be at most proportional to the complexity of updating the node classifiers for one label,
i.e., it should be proportional to the depth of the tree. We propose two such policies in the next subsection.

The \Algo{InsertNode} and \Algo{AddLeaf} procedures involve specific operations
initializing classifiers in the new nodes.
\Algo{InsertNode} is given in Algorithm~\ref{alg:oplt-insert_node}.
It inserts a new node $v'$ as a child of the selected node $v$.
If $v$ is a leaf, then its label is reassigned to the new node.
Otherwise, all children of $v$ become the children of $v'$.
In both cases, $v'$ becomes the only child of $v$.
Figure~\ref{fig:oplt-tree_building} illustrates inserting $v'$ as
either a child of an internal node (c) or a leaf node (d).
Since, the node classifier of $\node'$ aims at estimating $\eta(\bx, \node')$,
defined as $\prob(z_{\node'} = 1 \given z_{\pa{\node'}} = 1, \bx)$,
its both classifiers, $\heta(v')$ and $\hat\theta(v')$,
are initialized as copies (by calling the \textsc{Copy} function)
of the auxiliary classifier $\hat\theta(v)$ of the parent node $v$.
Recall that the task of auxiliary classifiers is to accumulate all positive updates in nodes,
so the conditioning $z_{\pa{\node'}} = 1$ is satisfied in that way.

Algorithm~\ref{alg:oplt-add_leaf} outlines the \Algo{AddLeaf} procedure.
It adds a new leaf node $v''$ for label $j$ as a child of node $v$.
The classifier $\heta(v'')$ is created as an ``inverse''
of the auxiliary classifier $\hat\theta(v)$ from node $v$.
More precisely, the \Algo{InverseClassifier} procedure creates a wrapper
inverting the behavior of the base classifier.
It predicts $1 - \heta$, where $\heta$ is the prediction of the base classifier,
and flips the updates, i.e., positive updates become negative and negative updates become positive.
Finally, the auxiliary classifier $\hat{\theta}(v'')$ of the new leaf node is initialized.
%

The final step in the main loop of \Algo{OPLT.Train} updates the node classifiers. The regular classifiers, $\heta(v) \in H_T$,
are updated exactly as in \Algo{IPLT.Train} given in Algorithm~\ref{alg:plt-incremental-learning}.
The auxiliary classifiers, $\theta(v) \in \Theta_T$,
are updated only in positive nodes according to their definition and purpose.

Notice that \Algo{OPLT.Train} can also be run without prior initialization with \Algo{OPLT.Init} if only a tree with properly trained node and auxiliary classifiers is provided. One can create such a tree using a set of already available observations $\calD$ and then learn node and auxiliary classifiers using the same \Algo{OPLT.Train} algorithm. Because all labels from $\calD$ should be present in the created tree, it is not updated by the algorithm. From now on, \Algo{OPLT.Train} can be used again to correctly update the tree for new observations.

\subsection{Random and best-greedy policy}
\label{sec:policies}

We discuss two policies $A_{\textrm{policy}}$ for \Algo{OPLT} that
can be treated as non-trivial generalization of the policy used in \Algo{CPET} to the \multilabel{} setting.
%
\Algo{CPET} builds a binary balanced tree by expanding leaf nodes, which corresponds to the use of the third variant of the tree structure extension only.
As the result, it gradually moves away labels that initially have been placed close to each other.
Particularly, labels of the first observed examples will finally end in leaves at the opposite sides of the tree.
This may result in lowering the predictive performance and increasing training and prediction times.
To address these issues, we introduce a solution, inspired by \citep{Prabhu_et_al_2018, Wydmuch_et_al_2018},
in which pre-leaf nodes, i.e., parents of leaf nodes, can be of much higher arity than the other internal nodes.
In general, we guarantee that arity of each pre-leaf node is upperbounded by $b_{\max}$, while all other internal nodes by $b$, where $b_{\max} \ge b$.

Both policies, presented jointly in Algorithm~\ref{alg:oplt-policy}, start with selecting one of the pre-leaves.
The first policy traverses a tree from top to bottom by randomly selecting child nodes.
The second policy, in turn, selects a child node using a trade-off between the balancedness of the tree and
fit of $\bx$, i.e., the value of $\heta(\bx, v)$:
$$
    \textrm{score}_v = (1 - \alpha)\heta(\bx, v) + \alpha \frac{1}{|L_v|} \log  \frac{|L_{\pa{v}}|}{ |\childs{\pa{v}}|}\,,
$$
where $\alpha$ is a trade-off parameter.
It is worth to notice that both policies work in logarithmic time of the number of internal nodes.
Moreover, we run this selection procedure only once for the current observation, regardless of the number of new labels.
If the selected node $v$ has fewer leaves than $b_{max}$, both policies follow the first variant of the tree extension,
i.e., they add a new child node with the new label assigned to node $v$.
Otherwise, the policies follow the second variant,
in which additionally, a new internal node is added as a child of $v$ with all its children inherited.
In case the selected node has only one leaf node among its children, which only happens after adding a new label with the second variant,
the policy changes the selected node $v$ to the previously added leaf node.

The above policies have two advantages over \Algo{CPET}.
Firstly, new labels coming with the same observation should stay close to each other in the tree.
Secondly, the policies allow for efficient management of auxiliary classifiers, which basically need to reside only in pre-leaf nodes,
with the exception of leaf nodes added in the second variant.
The original \Algo{CPET} algorithm needs to maintain auxiliary classifiers in all leaf nodes.

\subsection{Theoretical analysis of OPLT}
\label{sec:oplt-theory}

The \Algo{OPLT} algorithm has been designed to satisfy the properness and efficiency property.
The theorem below states this fact formally.
\begin{restatable}{theorem}{thmoplt}
\label{thm:oplt}
\Algo{OPLT} is a proper and efficient online \Algo{PLT} algorithm.
\end{restatable}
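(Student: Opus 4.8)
The theorem has two halves, matching Definitions~\ref{oplt-proper} and~\ref{oplt-efficient}: \textbf{properness} and \textbf{efficiency}. Fix a sequence $\mathcal{S}$ and a step $t$, and condition on the (possibly random) choices of $A_{\textrm{policy}}$, so that $T_t$ is determined. The first bullet of properness, $l_j\in L_{T_t}$ iff $j\in\labels_t$, follows by a routine induction on $t$ from inspection of \Algo{UpdateTree}, \Algo{InsertNode} and \Algo{AddLeaf}: a leaf carrying $j$ is created exactly when $j$ first appears (or $j$ is parked at the root while the tree is empty), leaves are never deleted, and \Algo{InsertNode} preserves the leaf$\leftrightarrow$label correspondence. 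For the second bullet I would carry, over the iterations of \Algo{OPLT.Train}, the invariant: after step $t$ is fully processed, (a) for every $v\in V_{T_t}$, $\heta(v)$ is in the exact state produced by $A_{\textrm{online}}.\Algo{Update}$ from a fresh classifier along the very sequence of updates (same instances, same bits, positive nodes before negative) that \Algo{IPLT.Train} feeds node $v$ when run on the \emph{final} tree $T_t$ over $\mathcal{S}_t$; and (b) every surviving auxiliary classifier $\hat\theta(v)$ is in the state reached from a fresh classifier by the positive updates $(\bx_s,1)$ over exactly those $s\le t$ with $z_v^{(s)}=1$.

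The engine of the induction is a \emph{label-consistency lemma}: if $v$ already exists at step $s$, then $z_v^{(s)}$ is unaffected by later growth of $v$'s subtree, because any label added under $v$ after step $s$ is absent from $\calL_{\bx_s}$; hence the positive/negative bit \Algo{IPLT.Train} assigns to $v$ at step $s$ under $T_t$ is the bit \Algo{OPLT} actually uses at step $s$. Granting this, I check the three ways a node is born. The root comes from \Algo{OPLT.Init} with fresh classifiers, exactly as \Algo{IPLT.Train} initializes it. When \Algo{InsertNode}$(v)$ creates $v'$ at step $\tau$ we have $L_{v'}=L_v$ immediately before, so the lemma gives $z_{v'}^{(s)}=z_v^{(s)}$ for all $s<\tau$; therefore, in \Algo{IPLT.Train} on $T_t$, node $v'$ receives before step $\tau$ exactly the positive updates $v$ got (and, since its parent is $v$, no negatives) — precisely the current state of $\hat\theta(v)$, which is copied into both $\heta(v')$ and $\hat\theta(v')$. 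When \Algo{AddLeaf}$(j,v)$ creates the leaf $v''$ at step $\tau$, $j$ is new, so in \Algo{IPLT.Train} on $T_t$ node $v''$ gets before step $\tau$ only the negative updates $(\bx_s,0)$ at the steps with $z_v^{(s)}=1$; by the definition of \Algo{InverseClassifier} and the symmetry of $A_{\textrm{online}}$ under negating all training bits — which holds for the usual base learners, e.g.\ logistic regression by gradient descent from a symmetric start — \Algo{InverseClassifier}$(\hat\theta(v))$ is in exactly that state, while $\hat\theta(v'')$ is fresh, as~(b) requires. Since tree edits in a step create classifiers out of existing classifiers' states at the end of $\mathcal{S}_{\tau-1}$ without modifying any of them, and \Algo{UpdateTree} precedes \Algo{UpdateClassifiers}, a freshly born node carries its correct ``pre-$\tau$'' state at the instant the step-$\tau$ update hits it; thereafter \Algo{UpdateClassifiers} runs literally the same \Algo{AssignToNodes}/update loop as \Algo{IPLT.Train}, and determinism of $A_{\textrm{online}}$ propagates the invariant — including the case where one observation brings several new labels at once, since all \Algo{Copy}/\Algo{InverseClassifier} snapshots in that step still read pre-$\tau$ states. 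This yields $H_t=\Algo{IPLT.Train}(T_t,A_{\textrm{online}},\mathcal{S}_t)$.

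For efficiency, compare against \Algo{IPLT}'s space $C_s$ and time $C_t$ on $(T_t,\mathcal{S}_t)$. Space: $T_t$ has at most $2|\labels_t|-1$ nodes; \Algo{OPLT} keeps the same set $H_T$ as \Algo{IPLT} plus at most one auxiliary classifier per node (and, under the policies of Section~\ref{sec:policies}, only at pre-leaves and the few variant-2 leaves), each no larger than the corresponding node model; the Robin-Hood hash maps for features use space linear in the number of distinct features, which \Algo{IPLT} needs too — so space is $O(C_s)$. Time: the updates of the regular classifiers coincide with those of \Algo{IPLT}; the auxiliary updates occur only in positive nodes and are dominated by \Algo{IPLT}'s positive-update work; the per-new-label tree maintenance (policy descent, pointer surgery, leaf/node creation) costs $O(\text{depth})$, which the policy is explicitly required to keep proportional to the cost of updating the classifiers for one label; the lone genuinely new cost is the \Algo{Copy} of an auxiliary classifier in \Algo{InsertNode}, but $\hat\theta(v)$ is copied only $O(b_{\max})$ times over $v$'s lifetime (node arity is bounded) and one such copy costs no more than the positive-update work \Algo{IPLT} already spends at $v$, so summed over all nodes these copies add only a constant factor. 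Hence total time is $O(C_t)$.

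The crux, and the part hardest to make fully rigorous, is the classifier-equivalence invariant: one must reconcile \Algo{IPLT.Train}$(T_t,\cdot,\mathcal{S}_t)$ ``back-filling'' updates into nodes that did not exist when the early observations arrived with \Algo{OPLT} only ever touching currently existing nodes. The label-consistency lemma together with the auxiliary-classifier bookkeeping (copied on \Algo{InsertNode}, inverted on \Algo{AddLeaf}) is exactly what makes the two coincide, and verifying that \Algo{InverseClassifier} genuinely reproduces a negatively-initialized classifier is the delicate low-level point.
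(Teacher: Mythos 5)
Your proposal is correct and follows essentially the same route as the paper's proof: an induction over observations (with the multi-new-label case handled as an inner step), the key invariant that auxiliary classifiers accumulate exactly the positive updates of their node so that \textsc{Copy} in \Algo{InsertNode} and \textsc{InverseClassifier} in \Algo{AddLeaf} reproduce the states \Algo{IPLT.Train} would have back-filled, plus a per-node counting argument for efficiency. The only notable difference is that you make explicit the symmetry assumption on $A_{\textrm{online}}$ needed for \textsc{InverseClassifier} to equal a negatively-updated fresh classifier, a point the paper's proof takes for granted.
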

We present the proof in Appendix~\ref{app:oplt}.
To show the properness, it uses induction for both the outer and inner loop of the algorithm,
where the outer loop iterates over observations $(\bx_t, \calL_{\bx_t})$,
while the inner loop over new labels in $\calL_{\bx_{t}}$.
The key elements to prove this property are the use of the auxiliary classifiers
and the analysis of the three variants of the tree structure extension.
The efficiency is proved by noticing that the algorithm creates up to two new nodes per new label,
each node having at most two classifiers.
Therefore, the number of updates is no more than twice the number of updates in \Algo{IPLT}.
Moreover, any node selection policy in which cost is proportional to the cost
of updating \Algo{IPLT} classifiers for a single label
meets the efficiency requirement.
Notably, the policies presented above satisfy this constraint.
Note that training of \Algo{IPLT} can be performed in logarithmic time in the number of labels
under the additional assumption of using a balanced tree with constant nodes arity~\citep{Busa-Fekete_et_al_2019}.
Because presented policies aim to build trees close to balanced,
the time complexity of the \Algo{OPLT} training should also be close to logarithmic in the number of labels.


\section{Experiments}
\label{sec:experiments}


%
\begin{table}[ht]
    \caption{Datasets used for experiments on extreme multi-label classification task and few-shot multi-class classification task.
    Notation: $N$ -- number of samples, $m$ -- number of labels, $d$ -- number of features, $S$ -- shot}
    \label{tab:datasets}
    \begin{center}
    \footnotesize
    \begin{tabular}{
        l c r r r r
    }
    \toprule
    Dataset & $N_{train}$ & $N_{test}$ & $m$ & $d$ \\

    \midrule

    AmazonCat  & 1186239 & 306782 & 13330 & 203882 \\
    Wiki10 & 14146 & 6616 & 30938 & 101938 \\
    WikiLSHTC & 1778351 & 587084 & 325056 & 1617899 \\
    Amazon & 490449 & 153025 & 670091 & 135909 \\
    \midrule
    ALOI & 97200 & 10800 & 1001 & 129 \\
    WikiPara-$S$ & $S\times 10000$ & 10000 & 10000 & 188084 \\

    \bottomrule
    \end{tabular}
    \end{center}
    \vspace{-10pt}
\end{table}

In this section, we empirically compare \Algo{OPLT} and \Algo{CMT} on two tasks, extreme multi-label classification and few-shot multi-class classification.
We implemented \Algo{OPLT} in C++.%
\footnote{Repository with the code and scripts to reproduce the experiments: \url{https://github.com/mwydmuch/napkinXC}}
We use online logistic regression to train node classifiers with the \Algo{AdaGrad}~\citep{Duchi_et_al_2011} updates.

For \Algo{CMT}, we use a Vowpal Wabbit~\citep{Langford_et_al_2007} implementation (also in C++), provided by courtesy of its authors.
It uses linear models also incrementally updated by \Algo{AdaGrad}, but all model weights are stored in one large continuous array using the hashing trick.
However, it requires at least some prior knowledge about the size of the feature space since the size of the array must be determined beforehand, which can be hard in a fully online setting.
%
To address the problem of unknown features space, we store weights of \Algo{OPLT} in an easily extendable hash map based on Robin Hood Hashing~\citep{Celis_et_al_1985}, which ensures very efficient insert and find operations.
Since the model sparsity increases with the depth of a tree for sparse data,
this solution might be much more efficient in terms of used memory than the hashing trick and does not negatively impact predictive performance.

For all experiments, we use the same, fixed hyper-parameters for \Algo{OPLT}. We set learning rate to $1$,
\Algo{Adagrad}'s $\epsilon$ to $0.01$ and the tree balancing parameter $\alpha$ to $0.75$,
since more balanced trees yield better predictive performance
(see Appendix~\ref{app:tree-alpha} for empirical evaluation of the impact of parameter $\alpha$ on precision at $k$,
train and test times, and the tree depth).
The only exception is the degree of pre-leaf nodes, which we set to $100$ in the XMLC experiment, and to $10$
in the few-shot multi-class classification experiment.
For \Algo{CMT} we use hyper-parameters suggested by the authors.
According to the appendix of~\citep{Sun_et_al_2019},
\Algo{CMT} achieves the best predictive performance after 3 passes over training data.
For this reason, we give all algorithms the maximum of 3 such passes and report the best results
(see Appendix~\ref{app:detail-results} and \ref{app:detail-results-fs} for the detailed results after 1 and 3 passes).
We repeated all the experiments 5 times,
each time shuffling the training set and report the mean performance.
We performed all the experiments on an Intel Xeon E5-2697 v3 2.6GHz machine with 128GB of memory.

\subsection{Extreme multi-label classification}
\label{sec:xmlc-experiments}

\begin{table*}[ht!]
    \tabcolsep=4pt
    \caption{Mean precision at $\{1,3,5\}$ (\%) and CPU train time of \Algo{Parabel}, \Algo{IPLT}, \Algo{CMT}, \Algo{OPLT} for XMLC tasks.
    }
    \label{tab:xmlc-precision-results}
    \begin{center}
    \resizebox{\textwidth}{!}{
    \begin{tabular}{
        l
        | r r r r
        | r r r r
        | r r r r
        | r r r r
    }
    \toprule
    & \multicolumn{4}{c|}{AmazonCat}
    & \multicolumn{4}{c|}{Wiki10}
    & \multicolumn{4}{c|}{WikiLSHTC}
    & \multicolumn{4}{c}{Amazon}
    \\

    Algo.
    & \multicolumn{1}{c}{P$@1$} & \multicolumn{1}{c}{P$@3$} & \multicolumn{1}{c}{P$@5$} & \multicolumn{1}{c|}{$t_{train}$}
    & \multicolumn{1}{c}{P$@1$} & \multicolumn{1}{c}{P$@3$} & \multicolumn{1}{c}{P$@5$} & \multicolumn{1}{c|}{$t_{train}$}
    & \multicolumn{1}{c}{P$@1$} & \multicolumn{1}{c}{P$@3$} & \multicolumn{1}{c}{P$@5$} & \multicolumn{1}{c|}{$t_{train}$}
    & \multicolumn{1}{c}{P$@1$} & \multicolumn{1}{c}{P$@3$} & \multicolumn{1}{c}{P$@5$} & \multicolumn{1}{c}{$t_{train}$}
    \\
    \midrule

    \Algo{Parabel} & 92.58 & 78.53 & 63.90 & 10.8m & 84.17 & 72.12 & 63.30 & 4.2m & 62.78 & 41.22 & 30.27 & 14.4m & 43.13 & 37.94 & 34.00 & 7.2m \\

    \Algo{IPLT} & 93.11 & 78.72 & 63.98 & 34.2m & 84.87 & 74.42 & 65.31 & 18.3m & 60.80 & 39.58 & 29.24 & 175.1m & 43.55 & 38.69 & 35.20 & 79.4m \\

    \midrule

    \Algo{CMT}
    & 89.43 & 70.49 & 54.23 & 168.2m
    & 80.59 & 64.17 & 55.25 & 35.1m
    & - & - & - & -
    & - & - & - & - \\

    \Algo{OPLT}$_R$ & 92.66 & 77.44 & 62.52 & 99.5m & 84.34 & 73.73 & 64.31 & 30.3m & 47.76 & 30.97 & 23.37 & 330.1m & 38.42 & 34.33 & 31.32 & 134.2m \\

    \Algo{OPLT}$_B$ & 92.74 & 77.74 & 62.91 & 84.1m & 84.47 & 73.73 & 64.39 & 27.7m & 54.69 & 35.32 & 26.31 & 300.0m & 41.09 & 36.65 & 33.42 & 111.9m \\

    \Algo{OPLT-W} & 93.14 & 78.68 & 63.92 & 43.7m & 85.22 & 74.68 & 64.93 & 28.2m & 59.23 & 38.39 & 28.38 & 205.7m & 42.21 & 37.60 & 34.25 & 98.3m \\

    \bottomrule
    \end{tabular}}
    \end{center}
    \vspace{-5pt}
\end{table*}

\begin{figure*}[ht!]

    \centering

\scriptsize
\begin{tabular}{ccc}
\tabcolsep=0pt
\begin{tikzpicture}[]
            \begin{axis}[
                title=ALOI,
                xlabel=Number of examples,
            	ylabel=Entropy reduction (bits),
            	y label style={at={(axis description cs:0.15,0.5)},anchor=south},
            	x label style={at={(axis description cs:0.5,-0.15)},anchor=south},
            	ymin=3,
                ymax=10,
                width=6.25cm,
                height=3.5cm,
                grid=both,
            	legend pos=south east,
            	legend style={nodes={scale=0.9, transform shape}}]
            	\addplot[color=red] coordinates {
(10000,8.221587121264804)
(15000,8.344886182346173)
(20000,8.4641372862766)
(25000,8.524502537449163)
(30000,8.568081521704045)
(35000,8.615236037406826)
(40000,8.655709595518514)
(45000,8.69673459039983)
(50000,8.741938818829311)
(55000,8.782552722381924)
(60000,8.813245691686358)
(65000,8.846599871882102)
(70000,8.87743752197062)
(75000,8.905769065519532)
(80000,8.930258330599084)
(85000,8.955394180608101)
(90000,8.979376604487056)
(95000,9.00189792382318)
(97199,9.01231445878839)

            	};
            	\addplot[color=blue,style=densely dashed] coordinates {
(10000,7.277054876148728)
(15000,7.753998902448721)
(20000,8.017086773362719)
(25000,8.186659017202114)
(30000,8.315905816354034)
(35000,8.420743443367607)
(40000,8.497752062059769)
(45000,8.564614431690005)
(50000,8.621319301014191)
(55000,8.665076493029387)
(60000,8.70315262306323)
(65000,8.7354917146641)
(70000,8.764111903020595)
(75000,8.790859988500507)
(80000,8.811576993822163)
(85000,8.832181147742155)
(90000,8.851714691125197)
(95000,8.870007084180726)
(97199,8.876501598663777)

            	};
            	\addplot[color=violet,style=densely dashdotted] coordinates {
(10000,7.451211111832329)
(15000,7.904887466246706)
(20000,8.164906926675688)
(25000,8.341630009329911)
(30000,8.4870360800319)
(35000,8.585822601483166)
(40000,8.665424750351725)
(45000,8.728070160431894)
(50000,8.78352213467528)
(55000,8.827298239640111)
(60000,8.86578584398965)
(65000,8.897334221909615)
(70000,8.926209993952314)
(75000,8.952837326603353)
(80000,8.975561406273616)
(85000,8.99664860701806)
(90000,9.015415052386688)
(95000,9.033133882914305)
(97199,9.039914270047351)

            	};
            	\legend{\Algo{CMT}, \Algo{OPLT$_R$}, \Algo{OPLT$_B$}}
            \end{axis}
        \end{tikzpicture}
&%
\begin{tikzpicture}[]
            \begin{axis}[
                title=Wikipara-3,
                xlabel=Number of examples,
            	x label style={at={(axis description cs:0.5,-0.15)},anchor=south},
            	grid=both,
                xtick distance=5000,
                ymin=3,
                ymax=10,
                width=6.25cm,
                height=3.5cm]
            	\addplot[color=red] coordinates {
(10000,6.9541963103868705)
(15000,7.1497471195046804)
(20000,7.379378367071266)
(25000,7.529820946528699)
(29999,7.660566437710389)

            	};
            	\addplot[color=blue,style=densely dashed] coordinates {
(10000,6.658211482751794)
(15000,7.291557515404494)
(20000,7.636624620543648)
(25000,7.944858445807538)
(29999,8.294666693861961)

            	};
            	\addplot[color=violet,style=densely dashdotted] coordinates {
(10000,4.087462841250339)
(15000,5.437404202541455)
(20000,6.3750394313469245)
(25000,7.1001366712854495)
(29999,7.65826147626818)

            	};
            \end{axis}
        \end{tikzpicture}
&%
\begin{tikzpicture}[]
            \begin{axis}[
                title=Wikipara-5,
                xlabel=Number of examples,
            	x label style={at={(axis description cs:0.5,-0.15)},anchor=south},
            	grid=both,
                xtick distance=5000,
                ymin=3,
                ymax=10,
                width=6.25cm,
                height=3.5cm]
            	\addplot[color=red] coordinates {
(10000,7.199672344836368)
(15000,7.327956766630705)
(20000,7.535275376620799)
(25000,7.663913842115975)
(30000,7.798763388822395)
(35000,7.913727173005174)
(40000,7.981567281903017)
(45000,8.03342300153745)
(49999,8.111657346263133)

            	};
            	\addplot[color=blue,style=densely dashed] coordinates {
(10000,7.179909090014934)
(15000,7.781359713524659)
(20000,8.247927513443585)
(25000,8.632268215499511)
(30000,8.933690654952233)
(35000,9.222623459463485)
(40000,9.485829308701904)
(45000,9.687667867397796)
(49999,9.895755513337479)

            	};
            	\addplot[color=violet,style=densely dashdotted] coordinates {
(10000,4.584962500721156)
(15000,6.437405867189899)
(20000,7.3750394313469245)
(25000,7.977279923499916)
(30000,8.353146825498083)
(35000,8.78603760140825)
(40000,9.088125692188207)
(45000,9.370687406807217)
(49999,9.620982217595696)
            	};
            \end{axis}
        \end{tikzpicture}
\end{tabular}
    \caption{Online progressive performance of CMT and OPLT with respect to the number of samples on few-shot multi-class classification tasks.}
    \label{fig:online-few-shot}
    \vspace{-5pt}
\end{figure*}
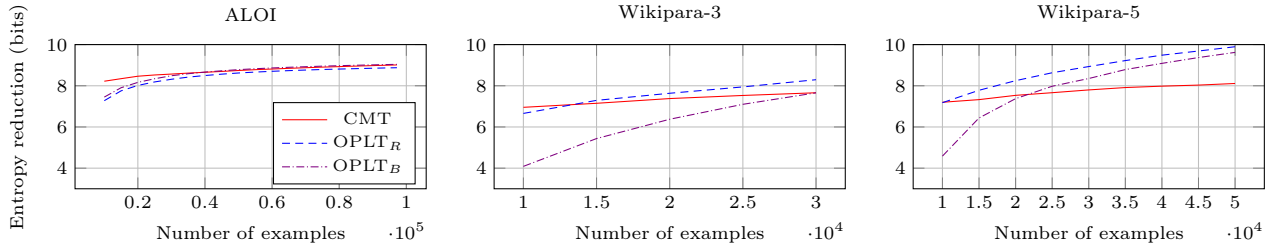

\begin{table}[ht!]
    \tabcolsep=4pt
    \caption{Mean accuracy of prediction (\%) and train CPU time of \Algo{CMT}, \Algo{OPLT} for few-shot multi-class classification tasks.}
    \label{tab:fs-results}
    \begin{center}
    \footnotesize
    \begin{tabular}{
        l
        | r r
        | r r
        | r r
    }
    \toprule
    & \multicolumn{2}{c|}{ALOI}
    & \multicolumn{2}{c|}{Wikipara-3}
    & \multicolumn{2}{c}{Wikipara-5}
    \\

    Algo.
    & \multicolumn{1}{c}{Acc.} & \multicolumn{1}{c|}{$t_{train}$}
    & \multicolumn{1}{c}{Acc.} & \multicolumn{1}{c|}{$t_{train}$}
    & \multicolumn{1}{c}{Acc.} & \multicolumn{1}{c}{$t_{train}$}
    \\
    \midrule
    \Algo{CMT} & 71.98 & 207.1s & 3.28 & 63.1s & 4.21 & 240.9s \\
    \Algo{OPLT}$_R$ & 66.50 & 20.3s & 27.34 & 16.4s & 40.67 & 27.6s \\
    \Algo{OPLT}$_B$ & 67.26 & 18.1s & 24.66 & 15.6s & 39.13 & 27.2s \\

    \bottomrule
    \end{tabular}
    \end{center}
    \vspace{-10pt}
\end{table}

In the XMLC setting, we compare performance in terms of precision at $\{1,3,5\}$ and the training time
(see Appendix~\ref{app:detail-results} for prediction times and propensity scored precision at $\{1,3,5\}$)
on four benchmark datasets: AmazonCat, Wiki10, WikiLSHTC and Amazon, taken from the XMLC repository~\citep{Bhatia_et_al_2016}.
We use the original train and test splits.
Statistics of these datasets are included in Table~\ref{tab:datasets}.
In this setting, \Algo{CMT} has been originally used to augment an online one-versus-rest (OVR) algorithm.
In other words, it can be treated as a specific index that enables fast prediction and speeds up training by performing a kind of negative sampling.
In addition to \Algo{OPLT} and \Algo{CMT} we also report results of \Algo{IPLT} and \Algo{Parabel}~\citep{Prabhu_et_al_2018}.
\Algo{IPLT} is implemented similarly to \Algo{OPLT}, but uses a tree structure built-in offline mode.
\Algo{Parabel} is, in turn, a fully batch variant of $\Algo{PLT}$. 
Not only the tree structure, but also node classifiers are trained in the batch mode using the \Algo{LIBLINEAR} library~\citep{liblinear}.
We use a single tree variant of this algorithm.
Both \Algo{IPLT} and \Algo{Parabel} are used with the same tree building algorithm, which is based on a specific hierarchical 2-means clustering of labels~\citep{Prabhu_et_al_2018}.
Additionally, we report the results of an \Algo{OPLT} with warm-start (\Algo{OPLT-W}) that is first trained on a sample of 10\% of training examples and a tree created using hierarchical 2-means clustering on the same sample. After this initial phase, \Algo{OPLT-W} is trained on the remaining 90\% of data using the \Algo{Best-greedy} policy
(see Appendix~\ref{app:warm-start} for the results of \Algo{OPLT-W} trained with different sizes of the warm-up sample
and comparison with \Algo{IPLT} trained only on the same warm-up sample).

Results of the comparison are presented in Table~\ref{tab:xmlc-precision-results}.
Unfortunately, \Algo{CMT} does not scale very well in the number of labels nor in the number of examples,
resulting in much higher memory usage for massive datasets.
Therefore, we managed to obtain results only for Wiki10 and AmazonCat datasets using all available 128GB of memory.
\Algo{OPLT} with both extension policies achieves results as good as \Algo{Parabel} and \Algo{IPLT} and significantly outperforms \Algo{CMT} on AmazonCat and Wiki10 datasets.
For larger datasets \Algo{OPLT} with \Algo{Best-greedy} policy outperforms the \Algo{Random} policy but obtains worse results than its offline counterparts, with trees built with hierarchical 2-means clustering, especially on the WikiLSHTC dataset. \Algo{OPLT-W}, however, achieves results almost as good as \Algo{IPLT} what proves that good initial structure, even with only some labels, helps to build a good tree in an online way. In terms of training times, \Algo{OPLT}, as expected, is slower than \Algo{IPLT} due to additional auxiliary classifiers and worse tree structure, both leading to a larger number of updates.

\subsection{Few-shot multi-class classification}
\label{sec:fs-experiments}

In the second experiment we compare \Algo{OPLT} with \Algo{CMT} on three few-shot learning multi-class datasets: ALOI~\citep{Geusebroek_et_al_2005}, 3 and 5-shot versions of WikiPara. Statistics of these datasets are also included in Table~\ref{tab:datasets}.
\Algo{CMT} has been proven in~\citep{Sun_et_al_2019} to perform better than two other logarithmic-time online multi-class algorithms, \Algo{LOMTree}~\citep{Choromanska_Langford_2015} and \Algo{Recall Tree}~\citep{Daume_et_al_2017} on these specific datasets. We use here the same version of \Algo{CMT} as used in a similar experiment in the original paper~\citep{Sun_et_al_2019}.

Since \Algo{OPLT} and \Algo{CMT} operate online, we compare their performance in two ways: 1) using online progressive validation~\citep{Blum_et_al_1999}, where each example is tested ahead of training and 2) using offline evaluation on the test set after seeing whole training set.
Figure~\ref{fig:online-few-shot} summarizes the results in terms of progressive performance.
In the same fashion as in \citep{Sun_et_al_2019}, we report entropy reduction of accuracy from the constant predictor, calculated as $\log_2 (\mathrm{Acc_{algo}}) - \log_2(\mathrm{Acc_{const}})$, where $\mathrm{Acc_{algo}}$ and $\mathrm{Acc_{const}}$ is mean accuracy of the evaluated algorithm and the constant predictor.
In Table~\ref{tab:fs-results} we report results on the test datasets.
In online and offline evaluation \Algo{OPLT} performs similar to \Algo{CMT} on ALOI dataset,
while it significantly dominates on the WikiPara datasets. 


\section{Conclusions}
\label{sec:conclusions}

In this paper, we introduced online probabilistic label trees, an algorithm that trains a label tree classifier in a fully online manner, without any prior knowledge about the number of training instances, their features and labels. \Algo{OPLT}s can be used for both multi-label and multi-class classification. They outperform \Algo{CMT} in almost all experiments, scaling at the same time much more efficiently on tasks with a large number of examples, features and labels.

\subsubsection*{Acknowledgements}

Computational experiments have been performed in Poznan Supercomputing and Networking Center.

\bibliography{references}

\newpage
\pagebreak
\appendix
\onecolumn

\section{Training in \Algo{PLT}}
\label{app:training}

The pseudocode with a brief description of the training algorithm for \Algo{IPLT}s is given in the main text.
Here we discuss it in more detail.
The algorithm first initializes all node classifiers.
Training relies on a proper assignment of training examples to nodes.
To train probabilistic classifiers $\heta(v)$, $v \in V_T$,
in all nodes of a tree $T$, we need to properly filter training examples as given in (\ref{eqn:plt_factorization}).
In each iteration the algorithm identifies for a given training example a set of \emph{positive} and \emph{negative nodes},
i.e., the nodes for which a training example is treated respectively
as positive (i.e, $(\bx, z_v = 1)$), or negative (i.e., $(\bx, z_v = 0)$).
The \textsc{AssignToNodes} method is given in Algorithm~\ref{alg:plt-assign}.
It initializes the positive nodes to the empty set and the negative nodes to the root node (to deal with $\by$ of all zeros).
Next, it traverses the tree from the leaves corresponding to the labels of the training example to the root
adding the visited nodes to the set of positive nodes.
It also removes each visited node from the set of negative nodes, if it has been added to this set before.
All children of the visited node, which are not in the set of positive nodes, are then added to the set of negative nodes.
If the parent node of the visited node has already been added to positive nodes, the traversal on this path stops.

Using the positive and negative nodes the procedure updates the corresponding classifiers
with an online learner $A_{\textrm{online}}$ of choice.
Note that all node classifiers are trained in fact independently as updates in any node do not influence training of the other nodes.
The output of the algorithm is a set of probabilistic classifiers $H$.

%
\begin{algorithm}[H]
\caption{\Algo{IPLT/OPLT.AssignToNodes}$(T, \bx, \calL_{\bx})$}
\label{alg:plt-assign}
\begin{small}
\begin{algorithmic}[1]
\State $P = \emptyset$, $N = \{r_T\}$ \Comment{Initialize sets of positive and negative nodes}
\For{$j \in \calL_{\bx}$} \Comment{For all labels of the training example}
\State $v = \ell_j$  \Comment{Set $v$ to a leaf corresponding to label $j$}
\While{$v$ not null \textbf{and} $v \not \in P$} \Comment{On a path to the root or the first positive node (excluded)}
\State $P = P \cup \{v\}$ \Comment{Assign a node to positive nodes}
\State $N = N \setminus \{v\}$ \Comment{Remove the node from negative nodes if added there before}
\For{$v' \in \childs{v}$} \Comment{For all its children}
\If{$v' \not \in P$} \Comment{If a child is not a positive node}
\State $N = N \cup \{v'\}$ \Comment{Assign it to negative nodes}
\EndIf
\EndFor
\State $v = \pa{v}$ \Comment{Move up along the path}
\EndWhile
\EndFor
\State \textbf{return} $(P,N)$ \Comment{Return a set of positive and negative nodes for the training example}
\end{algorithmic}
\end{small}
\end{algorithm}
\vspace{-10pt}

\section{Prediction in \Algo{PLT}}
\label{app:prediction}

Algorithm~\ref{alg:ucs_prediction} outlines the prediction procedure for \Algo{PLT}s that returns top $k$ labels.
It is based on the uniform-cost search. Alternatively, one can use beam search.
\begin{algorithm}[h!]
\caption{\Algo{IPLT/OPLT.PredictTopLabels}$(T, H, k, \bx)$}
\label{alg:ucs_prediction}
\begin{small}
\begin{algorithmic}[1]
\State $\hat\by = \vec{0}$, $\calQ = \emptyset$, \Comment{Initialize prediction vector to all zeros and a priority queue, ordered descending by $\heta_v(\bx)$}
\State $k' = 0$ \Comment{Initialize counter of predicted labels}
\State $\calQ\mathrm{.add}((r_T, \heta(\bx, r_T)))$ \Comment{Add the tree root with the corresponding estimate of probability}
\While{$k' < k$}   \Comment{While the number of predicted labels is less than $k$}
	\State $(v, \heta_v(\bx)) = \calQ\mathrm{.pop}()$ \Comment{Pop the top element from the queue}
	\If{$v$ is a leaf}  \Comment{If the node is a leaf}
		\State $\hy_v = 1$ \Comment{Set the corresponding label in the prediction vector}
		\State $k' = k' + 1$ \Comment{Increment the counter}
	\Else \Comment{If the node is an internal node}
	\For{$v' \in \childs{v}$} \Comment{For all child nodes}
	    \State $\heta_{v'}(\bx) = \heta_v(\bx) \times \heta(\bx,v')$ \Comment{Compute $\heta_{v'}(\bx)$ using $\heta(v') \in H$}
		\State $\calQ\mathrm{.add}((v', \heta_{v'}(\bx)))$  \Comment{Add the node and the computed probability estimate}
	\EndFor
	\EndIf
\EndWhile
\State \textbf{return} $\hat\by$ \Comment{Return the prediction vector}
\end{algorithmic}
\end{small}
\end{algorithm}
\vspace{-10pt}

\section{The proof of the result from Section~\ref{sec:oplt-theory}}
\label{app:oplt}

Theorem~\ref{thm:oplt} concerns two properties, properness and efficency, of an \Algo{OPLT} algorithm.
We first prove that the \Algo{OPLT} algorithm satisfies each of the property in two separate lemmas.
The final proof of the theorem is then straight-forward.

\begin{lemma}
\label{lem:proper}
\Algo{OPLT} is a proper \Algo{OPLT} algorithm.
\end{lemma}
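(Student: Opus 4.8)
The plan is to establish the two clauses of Definition~\ref{oplt-proper} by a nested induction: the outer induction is on the iteration index $t$ of the main loop of \Algo{OPLT.Train}, and the inner induction is on the new labels of $\calL_{\bx_t}$ processed one by one inside \Algo{OPLT.UpdateTree}. The leaf--label clause is easy: for the first label ever seen the root is labelled, and thereafter \Algo{OPLT.AddLeaf} creates exactly one fresh leaf per new label while \Algo{OPLT.InsertNode} never destroys a label (when the selected $v$ is a leaf its label is handed to the new leaf $v'$); hence after iteration $t$ the labels carried by leaves of $T_t$ are exactly $\calL_t$. For the second clause I will carry the following invariant for every node $v$ of the current tree: $\heta(v)$ is the classifier obtained by running $A_{\textrm{online}}$ from \textsc{NewClassifier}() on precisely the (ordered) sequence of updates that \Algo{IPLT.Train} would apply to node $v$ when run on the current tree together with the observations processed so far, and --- whenever $\hat\theta(v)$ still exists --- $\hat\theta(v)$ is obtained likewise but from the subsequence consisting of the positive updates only. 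Here I use that within a fixed node the update order is the order of the observation indices, while the order of updates across distinct nodes is irrelevant, exactly as in \Algo{IPLT.Train}.

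The combinatorial heart of the inner step is the observation that extending the tree for a fresh label $j$ (one that occurs in no earlier observation) only adds $j$ to the leaf-sets $L_u$ of the nodes $u$ on a single root-to-node path and only reparents existing subtrees without altering the set of leaves they contain. Consequently, for every earlier observation $\bx_i$ all indicator values $z_u = \assert{\sum_{\ell_j \in L_u} y_j \ge 1}$ are unchanged, so the \Algo{IPLT} target of every pre-existing node --- and its (untouched) classifier --- remains correct; moreover, for the inserted node $v'$ one checks $z_{v'}^{(i)} = z_v^{(i)}$ on all earlier observations, whence in \Algo{IPLT} on the new tree $v'$ receives exactly the positive updates that $v$ received and never a negative one. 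Since \textsc{Copy} duplicates $\hat\theta(v)$ into both $\heta(v')$ and $\hat\theta(v')$ and, by the invariant, $\hat\theta(v)$ already carries exactly $v$'s positive updates, the invariant is inherited by $v'$; reparenting the former children of $v$ under $v'$ leaves their \Algo{IPLT} targets unchanged for the same reason $z_{v'}^{(i)} = z_v^{(i)}$. For the fresh leaf $v''$, its \Algo{IPLT} target over the observations seen so far is: no positive update and a negative update on every earlier $\bx_i$ for which $v$ is a positive node; this is precisely what \textsc{InverseClassifier}$(\hat\theta(v))$ realizes (it turns $v$'s accumulated positive updates into the corresponding negative ones), while $\hat\theta(v'')$ is correctly initialized empty because $j$ has never appeared. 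Iterating over the new labels of $\calL_{\bx_t}$ then shows that after \Algo{OPLT.UpdateTree} the invariant holds for $T_t$ against $\calS_{t-1}$.

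To finish iteration $t$ I will note that \Algo{OPLT.UpdateClassifiers} computes $(P,N)$ with the very same \textsc{AssignToNodes} routine that \Algo{IPLT.Train} uses on $T_t$, then applies the positive update $(\bx_t,1)$ to $\heta(v)$ for $v\in P$ and the negative update $(\bx_t,0)$ to $\heta(v)$ for $v\in N$ --- exactly \Algo{IPLT}'s behaviour in its $t$-th iteration --- and additionally the positive update to $\hat\theta(v)$ for $v\in P$, which is precisely the positive sub-update. Hence the invariant is restored for $T_t$ against $\calS_t$, which is the second clause of Definition~\ref{oplt-proper}. The base case is \Algo{OPLT.Init}: the one-node tree with two fresh classifiers trivially matches \Algo{IPLT.Train} on the single-node tree and the empty sequence (and the root behaves as an ordinary node, so that the all-zero label vector is handled by the root's negative updates, just as \textsc{AssignToNodes} initializes $N=\{r_T\}$).

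The step I expect to be the main obstacle is exactly the inner-loop bookkeeping of the second paragraph: verifying that each application of \Algo{OPLT.InsertNode}/\Algo{OPLT.AddLeaf} and each subtree reparenting leaves every pre-existing node's \Algo{IPLT} target invariant, and that \textsc{Copy} and \textsc{InverseClassifier} endow $v'$ and $v''$ with exactly those targets. This rests on the freshness of $j$ (so $L_u$ only ever grows by labels absent from all earlier observations, giving monotonicity of the relevant leaf-sets and stability of the $z_u$'s), on \textsc{InverseClassifier} realizing the classifier $A_{\textrm{online}}$ would produce on the label-flipped update sequence, and on the standing requirement that $A_{\textrm{policy}}$ only ever returns a node whose auxiliary classifier is still present together with an $insert$ flag consistent with the chosen extension variant.
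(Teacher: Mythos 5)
Your proposal is correct and follows essentially the same route as the paper's proof: a nested induction (outer over observations, inner over the new labels of $\calL_{\bx_t}$), the invariant that each auxiliary classifier $\hat\theta(v)$ accumulates exactly the positive updates of $\heta(v)$, and a case analysis of the three tree-extension variants showing that \textsc{Copy} and \textsc{InverseClassifier} initialize $v'$ and $v''$ with precisely the states \Algo{IPLT.Train} would have produced on the extended tree over $\calS_{t-1}$. Your explicit observation that a fresh label leaves all earlier indicators $z_u$ unchanged (so pre-existing nodes' targets are undisturbed and $z_{v'}^{(i)} = z_v^{(i)}$ for the inserted node) is the same fact the paper uses, just stated more carefully.
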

\begin{proof}
We need to show that for any $\calS$ and $t$ the two of the following hold.
Firstly, that the set $L_{T_t}$ of leaves of tree $T_t$ built by \Algo{OPLT} correspond to $\calL_t$,
the set of all labels observed in $S_t$.
Secondly, that the set $H_t$ of classifiers trained by \Algo{OPLT}
is exactly the same as $H = \textsc{IPLT.Train}(T_t, A_{\textrm{online}}, \mathcal{S}_t)$,
i.e., the set of node classifiers trained incrementally by Algorithm~\ref{alg:plt-incremental-learning}
on $\calD = \calS_t$ and tree $T_t$ given as input parameter.
We will prove it by induction with the base case for $\calS_0$ and
the induction step for $\calS_t$, $t \ge 1$, with the assumption that the statement holds for $\calS_{t-1}$.

For the base case of $\calS_0$, tree $T_0$ is initialized with the root node $r_T$ with no label assigned
and set $H_0$ of node classifiers with a single classifier assigned to the root.
As there are no observations, this classifier receives no updates.
Now, notice that $\textsc{IPLT.Train}$, run on $T_0$ and $\calS_0$,
returns exactly the same set of classifiers $H$ that contains solely the initialized root node classifier
without any updates  (assuming that initialization procedure is always the same).
There are no labels in any sequence of 0 observations and also $T_0$ has no label assigned.

The induction step is more involved as we need to take into account
the internal loop which extends the tree with new labels.
Let us consider two cases.
In the first one, observation $(\bx_t, \calL_{\bx_t})$ does not contain any new label.
This means that that the tree $T_{t-1}$ will not change, i.e., $T_{t-1} = T_t$.
Moreover, node classifiers from $H_{t-1}$ will get the same updates for $(\bx_t, \calL_{\bx_t})$
as classifiers in \Algo{IPLT.Train}, therefore $H_t =  \Algo{IPLT.Train}(T_t, A_{\textrm{online}}, \mathcal{S}_t)$.
It also holds that $l_j \in L_{T_t}$ iff $j \in \labels_t$, since $\labels_{t-1} = \labels_t$.
In the second case, observation $(\bx_t, \calL_{\bx_t})$ has $m' = | \calL_{\bx_t} \setminus \labels_{t-1}|$ new labels.
Let us make the following assumption for the \Algo{UpdateTree} procedure,
which we later prove that it indeed holds.
Namely, we assume that the set $H_{t'}$ of classifiers after calling the \Algo{UpdateTree} procedure
is the same as the one being returned by $\Algo{IPLT.Train}(T_t, A_{\textrm{online}}, \mathcal{S}_{t-1})$,
where $T_t$ is the extended tree.
Moreover, leaves of $T_t$ correspond to all observed labels seen so far.
If this is the case, the rest of the induction step is the same as in the first case.
All updates to classifiers in $H_{t'}$ for $(\bx_t, \calL_{\bx_t})$ are the same as in \Algo{IPLT.Train}.
Therefore $H_t =  \Algo{IPLT.Train}(T_t, A_{\textrm{online}}, \mathcal{S}_t)$.

Now, we need to show that the assumption for the \Algo{UpdateTree} procedure holds.
To this end we also use induction, this time on the number $m'$ of new labels.
For the base case, we take $m' = 1$.
The induction step is proved for $m' > 1$ with the assumption that the statement holds for $m' - 1$.

For $m' = 1$ we need consider two scenarios.
In the first scenario, the new label is the first label in the sequence.
This label will be then assigned to the root node $r_T$.
So, the structure of the tree does not change, i.e., $T_{t-1} = T_t$.
Furthermore, the set of classifiers also does not changed,
since the root classifier has already been initialized.
It might be negatively updated by previous observations.
Therefore, we have $H_{t'} = \Algo{IPLT.Train}(T_t, A_{\textrm{online}}, \mathcal{S}_{t-1})$.
Furthermore, all observed labels are appropriately assigned to the leaves of $T_t$.
In the second scenario, set $\labels_{t-1}$ is not empty.
We need to consider in this scenario the three variants of tree extension illustrated in Figure~\ref{fig:oplt-tree_building}.

In the first variant, tree $T_{t-1}$ is extended by one leaf node only without any additional ones.
\Algo{AddNode} creates a new leaf node $v''$ with the new label assigned to the tree.
After this operation the tree contains all labels from $\calS_t$.
The new leaf $v''$ is added as a child of the selected node $v$.
This new node is initialized as $\heta(v'') = \textsc{InverseClassifier}(\hat\theta(v))$.
Recall that \Algo{InverseClassifier} creates a wrapper that inverts the behavior of the base classifier.
It predicts $1 - \heta$, where $\heta$ is the prediction of the base classifier,
and flips the updates, i.e., positive updates become negative and negative updates become positive.
From the definition of the auxiliary classifier,
we know that $\hat\theta(v)$ has been trained on all positives updates of $\heta(v)$.
So, $\heta(v'')$  is initialized with a state as
if it was updated negatively each time $\heta(v)$ was updated positively in sequence $S_{t-1}$.
Notice that in $S_{t-1}$ there is no observation labeled with the new label.
Therefore $\heta(v'')$ is the same as if it was created and updated using $\textsc{IPLT.Train}$.
There are no other operations on $T_{t-1}$,
so we have that $H_{t'} = \Algo{IPLT.Train}(T_t, A_{\textrm{online}}, \mathcal{S}_{t-1})$.

In the second variant, tree $T_{t-1}$ is extended by internal node $v'$ and leaf node $v''$.
The internal node $v'$ is added in \Algo{InsertNode}.
It becomes a parent of all child nodes of the selected node $v$ and the only child of this node.
Thus, all leaves of the subtree of $v$ does not change.
Since $v'$ is the root of this subtree,
its classifier $\heta(v')$ should be initialized as a copy of the auxiliary classifier $\hat\theta(v)$,
which has accumulated all updates from and only from observations with labels assigned to the leaves of this subtree.
Addition of the leaf node $v''$ can be analyzed as in the first variant.
Since nothing else has changed in the tree and in the node classifiers,
we have that $H_{t'} = \Algo{IPLT.Train}(T_t, A_{\textrm{online}}, \mathcal{S}_{t-1})$.
Moreover, the tree contains the new label, so the statement holds.

The third variant is similar to the second one.
Tree $T_{t-1}$ is extended by two leaf nodes $v'$ and $v''$
being children of the selected node $v$.
Insertion of leaf $v'$ is similar to insertion of node $v'$ in the second variant,
with the difference that $v$ does not have any children and
its label has to be reassigned to $v'$.
The new classifier in $v'$ is initialized as a copy of the auxiliary classifier $\hat\theta(v)$,
which contains all updates from and only from observations with the label assigned previously to $v$.
Insertion of $v''$ is exactly the same as in the second variant.
From the above, we conclude that $H_{t'} = \Algo{IPLT.Train}(T_t, A_{\textrm{online}}, \mathcal{S}_{t-1})$
and that $T_t$ contains all labels from $T_{t-1}$ and the new label.
In this way we prove the base case.

The induction step is similar to the second scenario of the base case.
The only difference is that we do not extent tree $T_{t-1}$,
but an intermediate tree with $m'-1$ new labels already added.
Because of the induction hypothesis,
the rest of the analysis of the three variants of tree extension is exactly the same.
This ends the proof that the assumption for the inner loop holds.
At the same time it finalizes the entire proof.
\end{proof}

\begin{lemma}
\label{lem:efficient}
\Algo{OPLT} is an efficient \Algo{OPLT} algorithm.
\end{lemma}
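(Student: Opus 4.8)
By Definition~\ref{oplt-efficient} it suffices to exhibit a universal constant $c$ such that, for every $\calS$ and $t$, both the space and the total running time of \Algo{OPLT} on $\calS_t$ are at most $c$ times $C_s$ and $C_t$, the space and time of \Algo{IPLT} run on $\calS_t$ with the very tree $T_t$ that \Algo{OPLT} itself outputs. The plan is to bound the overhead of \Algo{OPLT} over \Algo{IPLT} by a constant in three parts --- what \Algo{OPLT} stores, the per-iteration cost of \Algo{UpdateClassifiers}, and the per-iteration cost of \Algo{UpdateTree} --- and then sum over the $t$ iterations. Note that comparing against \Algo{IPLT} on the same tree $T_t$ means we never have to reason about the quality of the tree produced online.

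First I would treat space. \Algo{OPLT} maintains the tree $T_t$ (exactly the object \Algo{IPLT} takes as input), the set $H_T$ of node classifiers (which by Lemma~\ref{lem:proper} is precisely what \Algo{IPLT} returns), and additionally the set $\Theta_T$ of auxiliary classifiers. I would argue that each auxiliary classifier lies in the same hypothesis class as a regular node classifier and has received only a subset of that node's positive updates, hence uses no more space than a regular one; since $|\Theta_T|\le|V_{T_t}|\le 2|\calL_t|-1$, the auxiliary classifiers contribute at most a factor of two, so the total space is $O(C_s)$.

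Next I would bound time. In each iteration \Algo{UpdateClassifiers} calls \Algo{AssignToNodes} (the same routine \Algo{IPLT} uses), then issues exactly the \Algo{IPLT} updates to $H_T$ plus at most one extra update to the auxiliary classifier at each positive node --- at most doubling the classifier-update work for that instance. \Algo{UpdateTree} is invoked only when new labels appear; for each new label it runs $A_{\textrm{policy}}$ once and then \Algo{InsertNode} and/or \Algo{AddLeaf}, each performing $O(1)$ pointer updates and creating a bounded number of classifiers via \Algo{NewClassifier}, \Algo{Copy}, \Algo{InverseClassifier}. Thus \Algo{OPLT} creates at most two nodes and at most four classifiers per new label, hence $O(|\calL_t|)$ classifier constructions in total, matching the $2|\calL_t|-1$ constructions \Algo{IPLT} performs at initialization up to a constant; I would also note that copying or inverting a single classifier costs no more than a constant times building one node's parameters in \Algo{IPLT}, so these costs are absorbed into $O(C_t)$. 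The remaining ingredient is $A_{\textrm{policy}}$: Definition~\ref{oplt-efficient} (as remarked just below it) makes a policy admissible exactly when its per-label cost is proportional to that of one \Algo{IPLT} classifier update for a single label, i.e. $O(\text{path length})$, and I would verify that the \Algo{Random} and \Algo{Best-greedy} policies of Algorithm~\ref{alg:oplt-policy} qualify, since each descends a single path of internal nodes doing $O(1)$ work per step. Summing the three contributions over all iterations then yields total \Algo{OPLT} time $\le c\,C_t$.

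The main obstacle I anticipate is not mathematical but one of careful bookkeeping: making precise that the classifier-level operations peculiar to \Algo{OPLT} --- the extra auxiliary updates, \Algo{Copy}, and \Algo{InverseClassifier} --- each cost only a constant factor more than work \Algo{IPLT} already performs. This ultimately rests on two observations that I would state cleanly: \Algo{OPLT} touches each node at most a constant factor more often than \Algo{IPLT}, and it allocates at most a constant factor more classifiers; everything else is routine.
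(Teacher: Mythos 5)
Your proposal is correct and follows essentially the same route as the paper's proof: bound the space overhead by the one extra auxiliary classifier per node, bound the update overhead by the at most one additional positive update per positive node, and argue that the tree-extension work (a constant number of new nodes and classifiers per new label plus a policy whose cost is proportional to a single-label classifier update) is absorbed into a constant factor of $C_t$. Your version is somewhat more explicit than the paper's — in particular about the costs of \textsc{Copy} and \textsc{InverseClassifier} and about verifying the \Algo{Random} and \Algo{Best-greedy} policies, which the paper handles only by a remark in the main text while its lemma proof discusses a simple complete-tree policy with an array list — but the decomposition and the key observations are the same.
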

\begin{proof}
The \Algo{OPLT} maintains one additional classifier per each node in comparison to \Algo{IPLT}.
Hence, for a single observation there is at most one update more for each positive node.
Furthermore, the time and space cost of the complete tree building policy is constant per a single label,
if implemented with an array list.
In this case, insertion of any new node can be made in amortized constant time,
and the space required by the array list is linear in the number of nodes.
Concluding the above, the time and space complexity of \Algo{OPLT} is in constant factor of $C_t$ and $C_s$,
the time and space complexity of \Algo{IPLT} respectively.
This proves that \Algo{OPLT} is an efficient \Algo{OPLT} algorithm.
\end{proof}

\thmoplt*
\begin{proof}
The theorem directly follows from Lemma~\ref{lem:proper} and Lemma~\ref{lem:efficient}.
\end{proof}

\section{Detailed results of \Algo{OPLT} and \Algo{CMT} on extreme multi-label classification tasks}
\label{app:detail-results}


In Table~\ref{tab:xmlc-psp-results}, complementary to Table~\ref{tab:xmlc-precision-results}, we report performance on propensity scored precision at $\{1,3,5\}$~\citep{Jain_et_al_2016} defined as:
$$
\mathrm{PSP}@k = \frac{1}{k} \sum_{j \in \hat \calL_{\bx}} q_j \assert{\tilde{y}_j = 1} \,,
$$
where $q_j = 1 + C(N_j + B)^{-A}$ is inverse propensity of label $j$, where $N_j$ is the number of data points annotated with label $j$ in the observed ground truth dataset of size $N$, $A$, $B$ are dataset specific parameters and $C = (\log N - 1)(B + 1)^A$. For Wiki10 and AmazonCat datasets we use $A=0.55, B=1.5$, for WikiLSHTC $A=0.5, B=0.4$, and for Amazon $A=0.6, B=2.6$ as recommended in \citep{Jain_et_al_2016}.
Since values of $q_j$ are higher for infrequent labels, propensity scored precision at $k$ promotes the accurate prediction on harder to predict tail labels. As in the case of the results in terms of standard precision at $k$, \Algo{OPLT} outperforms \Algo{CMT}, being slightly worse than its offline counterparts \Algo{IPLT} and \Algo{Parabel}, especially on the WikiLSHTC dataset.

\begin{table}[htbp!]
    \tabcolsep=4pt
    \caption{Mean propensity weighted precision at $\{1,3,5\}$ (\%) of \Algo{Parabel}, \Algo{IPLT}, \Algo{CMT}, \Algo{OPLT} for XMLC tasks. 
    Notation: PP$@k$ -- propensity weighted precision at $k$-position, $R$ -- \Algo{Random} policy, $B$ -- \Algo{Best-greedy} policy.
    }
    \label{tab:xmlc-psp-results}
    \begin{center}
    \begin{tabular}{
        l 
        | r r r
        | r r r
        | r r r 
        | r r r
    }
    \toprule 
    & \multicolumn{3}{c|}{AmazonCat-13K}
    & \multicolumn{3}{c|}{Wiki10-31K}
    & \multicolumn{3}{c|}{WikiLSHTC-320K} 
    & \multicolumn{3}{c}{Amazon-670K} 
    \\
    
    Algorithm
    & \multicolumn{1}{c}{PP$@1$} & \multicolumn{1}{c}{PP$@3$} & \multicolumn{1}{c|}{PP$@5$} 
    & \multicolumn{1}{c}{PP$@1$} & \multicolumn{1}{c}{PP$@3$} & \multicolumn{1}{c|}{PP$@5$}
    & \multicolumn{1}{c}{PP$@1$} & \multicolumn{1}{c}{PP$@3$} & \multicolumn{1}{c|}{PP$@5$}
    & \multicolumn{1}{c}{PP$@1$} & \multicolumn{1}{c}{PP$@3$} & \multicolumn{1}{c}{PP$@5$}
    \\
    \midrule
    
    \Algo{Parabel} & 50.89 & 63.60 & 71.27 & 11.73 & 12.70 & 13.68 & 25.91 & 31.50 & 34.77 & 25.39 & 28.57 & 31.21 \\
    \Algo{IPLT} & 49.97 & 63.13 & 70.77 & 12.26 & 13.75 & 14.80 & 24.16 & 29.09 & 32.29 & 25.25 & 28.85 & 32.12 \\
    
    \midrule
    
    \Algo{CMT} & 47.48 & 54.25 & 56.03 & 9.68 & 9.66 & 9.67 & - & - & - & - & - & - \\
    
    \Algo{OPLT}$_R$ & 48.98 & 60.67 & 67.51 & 11.64 & 13.07 & 14.12 & 16.12 & 19.70 & 22.68 & 20.28 & 24.04 & 27.35 \\
    \Algo{OPLT}$_B$ & 49.30 & 61.55 & 68.64 & 11.59 & 13.19 & 14.26 & 20.69 & 24.87 & 28.14 & 23.41 & 27.26 & 30.49 \\
    \Algo{{OPLT-W}} & 49.86 & 62.91 & 70.44 & 11.69 & 13.41 & 14.43 & 22.56 & 27.10 & 30.23 & 23.65 & 27.62 & 30.95 \\

    \bottomrule
    \end{tabular}
    \end{center}
\end{table}

In Table~\ref{tab:xmlc-results-details}, we show detailed results of the empirical study we performed to evaluate \Algo{OPLT}s comprehensively. In addition to precision at $\{1,3,5\}$ and train times, we also report test times. We present the results for every algorithm that is using an incremental learning algorithm, updating the tree nodes (\Algo{IPLT}, \Algo{CMT}, and \Algo{OPLT}) after 1 and 3 passes over the training dataset. We also present the results of \Algo{OPLT-W} with a warm-start sample of size 5, 10, and 15\% of the training dataset (warm-up dataset).
In all cases, the initial tree was created using hierarchical 2-means clustering on the warm-up sample and later extended using \Algo{Best-greedy} policy. The results show that \Algo{IPLT} and \Algo{OPLT} achieve good results after just one pass over datasets. Prediction times of \Algo{OPLT} are slightly worse than \Algo{IPLT}, probably due to the worst tree structure,  the prediction times of \Algo{OPLT} with warm-start are better than \Algo{OPLT} build from scratch.  The reported prediction times of \Algo{Parabel} are given for the batch prediction. The prediction times seem to be faster, but \Algo{Parabel} needs to decompress the model during prediction, which makes it less suitable for online prediction. It is only efficient when the batches are sufficiently large.

\begin{table}[htbp!]
    \tabcolsep=4pt
    \caption{Mean precision at $\{1,3,5\}$ (\%), train and test CPU time (averaged over 5 runs) of \Algo{Parabel}, \Algo{IPLT}, \Algo{CMT}, \Algo{OPLT} for XMLC tasks.
    Notation: P$@k$ -- precision at $k$-position, $T$ -- CPU time, $R$ -- \Algo{Random} policy, $B$ -- \Algo{Best-greedy} policy, $w$ -- percentage of the training dataset sampled for warm-start, $p$ -- number of passes over train dataset, $*$ -- offline prediction, depends on the test dataset size. Highlighted values are presented in the main paper in Table~\ref{tab:xmlc-precision-results}.
    }
    \label{tab:xmlc-results-details}
    \begin{center}
    \begin{tabular}{
        l 
        | r r r r r
        | r r r r r
    }
    
    \toprule 
    & \multicolumn{5}{c|}{AmazonCat}
    & \multicolumn{5}{c}{Wiki10}
    \\
    Algorithm
    & \multicolumn{1}{c}{P$@1$} & \multicolumn{1}{c}{P$@3$} & \multicolumn{1}{c}{P$@5$} & \multicolumn{1}{c}{$t_{train}$} & \multicolumn{1}{c|}{$t_{test}/N$}
    & \multicolumn{1}{c}{P$@1$} & \multicolumn{1}{c}{P$@3$} & \multicolumn{1}{c}{P$@5$} & \multicolumn{1}{c}{$t_{train}$} & \multicolumn{1}{c}{$t_{test}/N$}
    \\
    \midrule 
    \Algo{Parabel} & \colorbox{blue!30}{92.58} & \colorbox{blue!30}{78.53} & \colorbox{blue!30}{63.90} & \colorbox{blue!30}{11.51m} & $^*$0.3ms & \colorbox{blue!30}{84.17} & \colorbox{blue!30}{72.12} & \colorbox{blue!30}{63.30} & \colorbox{blue!30}{4.00m} & $^*$0.9ms \\
    \Algo{IPLT}$_{p=1}$ & \colorbox{blue!30}{93.11} & \colorbox{blue!30}{78.72} & \colorbox{blue!30}{63.98} & \colorbox{blue!30}{34.2m} & 0.6ms & \colorbox{blue!30}{84.87} & \colorbox{blue!30}{74.42} & \colorbox{blue!30}{65.31} & \colorbox{blue!30}{18.3m} & 10.1ms \\
    \Algo{IPLT}$_{p=3}$ & 93.19 & 78.64 & 63.92 & 115.7m & 0.6ms & 84.37 & 73.90 & 64.70 & 46.8m & 10.4ms \\
    \midrule
    
    
    \Algo{CMT}$_{p=1}$ & 87.51 & 69.49 & 53.99 & 45.8m & 1.1ms & \colorbox{blue!30}{80.59} & \colorbox{blue!30}{64.17} & \colorbox{blue!30}{55.05} & \colorbox{blue!30}{35.1m} & 16.5ms \\
    \Algo{CMT}$_{p=3}$ & \colorbox{blue!30}{89.43} & \colorbox{blue!30}{70.49} & \colorbox{blue!30}{54.23} & \colorbox{blue!30}{168.3m} & 1.5ms & 79.86 & 64.72 & 55.25 & 120.9m & 20.4ms \\
    
    \Algo{OPLT}$_{R,p=1}$ & \colorbox{blue!30}{92.66} & \colorbox{blue!30}{77.44} & \colorbox{blue!30}{62.52} & \colorbox{blue!30}{99.5m} & 1.7ms & \colorbox{blue!30}{84.34} & \colorbox{blue!30}{73.73} & \colorbox{blue!30}{64.41} & \colorbox{blue!30}{30.3m} & 14.1ms \\
    \Algo{OPLT}$_{R,p=3}$ & 92.65 & 77.43 & 62.59 & 278.3m & 1.4ms & 83.09 & 72.25 & 63.33 & 86.5m & 18.2ms \\
    \Algo{OPLT}$_{B,p=1}$ & \colorbox{blue!30}{92.74} & \colorbox{blue!30}{77.74} & \colorbox{blue!30}{62.91} & \colorbox{blue!30}{84.1m} & 1.5ms & \colorbox{blue!30}{84.47} & \colorbox{blue!30}{73.73} & \colorbox{blue!30}{64.39} & \colorbox{blue!30}{27.7m} & 16.4ms \\
    \Algo{OPLT}$_{B,p=3}$ & 92.77 & 77.70 & 62.93 & 251.9m & 1.4ms & 83.39 & 72.50 & 63.48 & 86.7m & 18.2ms \\
    \midrule
    \Algo{OPLT-W}$_{w=5\%,p=1}$ & 93.14 & 78.67 & 63.85 & 40.6m & 0.8ms & 85.10 & 74.44 & 64.77 & 28.7m & 16.6ms \\
    \Algo{OPLT-W}$_{w=5\%,p=3}$ & 93.11 & 78.50 & 63.75 & 134.3m & 0.8ms & 83.93 & 73.34 & 63.75 & 75.9m & 17.0ms \\
    \Algo{OPLT-W}$_{w=10\%,p=1}$ & 93.14 & 78.68 & 63.92 & 43.7m & 0.9ms & 85.22 & 74.68 & 64.93 & 28.2m & 17.8ms \\
    \Algo{OPLT-W}$_{w=10\%,p=3}$ & \colorbox{blue!30}{93.10} & \colorbox{blue!30}{78.51} & \colorbox{blue!30}{63.79} & \colorbox{blue!30}{133.9m} & 0.8ms & \colorbox{blue!30}{84.75} & \colorbox{blue!30}{74.02} & \colorbox{blue!30}{64.33} & \colorbox{blue!30}{83.9m} & 18.3ms \\
    \Algo{OPLT-W}$_{w=15\%,p=1}$ & 93.17 & 78.75 & 63.95 & 50.1m & 1.0ms & 85.42 & 74.50 & 64.94 & 28.2m & 17.5ms \\
    \Algo{OPLT-W}$_{w=15\%,p=3}$ & 93.15 & 78.54 & 63.82 & 121.6m & 0.8ms & 84.69 & 73.90 & 64.29 & 79.1m & 19.5ms \\
    
    \toprule 
    & \multicolumn{5}{c|}{WikiLSHTC}
    & \multicolumn{5}{c}{Amazon}
    \\
    
    Algorithm
    & \multicolumn{1}{c}{P$@1$} & \multicolumn{1}{c}{P$@3$} & \multicolumn{1}{c}{P$@5$} & \multicolumn{1}{c}{$t_{train}$} & \multicolumn{1}{c|}{$t_{test}/N$}
    & \multicolumn{1}{c}{P$@1$} & \multicolumn{1}{c}{P$@3$} & \multicolumn{1}{c}{P$@5$} & \multicolumn{1}{c}{$t_{train}$} & \multicolumn{1}{c}{$t_{test}/N$}
    \\
    \midrule 
    \Algo{Parabel} & \colorbox{blue!30}{62.78} & \colorbox{blue!30}{41.22} & \colorbox{blue!30}{30.27} & \colorbox{blue!30}{15.4m} & $^*$0.3ms & \colorbox{blue!30}{43.13} & \colorbox{blue!30}{37.94} & \colorbox{blue!30}{34.00} & \colorbox{blue!30}{7.6m} & $^*$0.3ms \\
    \Algo{IPLT}$_{p=1}$ & 58.14 & 37.94 & 28.14 & 69.0m & 2.1ms & 40.78 & 35.88 & 32.28 & 33.2m & 5.1ms \\
    \Algo{IPLT}$_{p=3}$ & \colorbox{blue!30}{60.80} & \colorbox{blue!30}{39.58} & \colorbox{blue!30}{29.24} & \colorbox{blue!30}{175.1m} & 2.6ms & \colorbox{blue!30}{43.55} & \colorbox{blue!30}{38.69} & \colorbox{blue!30}{35.20} & \colorbox{blue!30}{79.4m} & 6.2ms \\
    \midrule
    \Algo{OPLT}$_{R,p=1}$ & 46.36 & 29.85 & 22.53 & 103.0m & 6.4ms & 36.27 & 32.13 & 29.01 & 46.1m & 17.0ms \\
    \Algo{OPLT}$_{R,p=3}$ & \colorbox{blue!30}{47.76} & \colorbox{blue!30}{30.97} & \colorbox{blue!30}{23.37} & \colorbox{blue!30}{330.1m} & 7.9ms & \colorbox{blue!30}{38.42} & \colorbox{blue!30}{34.33} & \colorbox{blue!30}{31.32} & \colorbox{blue!30}{134.2m} & 18.2ms \\
    \Algo{OPLT}$_{B,p=1}$ & 53.32 & 34.22 & 25.52 & 88.6m & 4.8ms & 38.42 & 34.03 & 30.73 & 36.7m & 10.0ms \\
    \Algo{OPLT}$_{B,p=3}$ & \colorbox{blue!30}{54.69} & \colorbox{blue!30}{35.32} & \colorbox{blue!30}{26.31} & \colorbox{blue!30}{300.0m} & 5.8ms & \colorbox{blue!30}{41.09} & \colorbox{blue!30}{36.65} & \colorbox{blue!30}{33.42} & \colorbox{blue!30}{111.9m} & 13.0ms \\
    \midrule
    \Algo{OPLT-W}$_{w=5\%,p=1}$ & 57.46 & 36.95 & 27.35 & 82.5m & 3.6ms & 39.45 & 34.85 & 31.44 & 35.3m & 9.1ms \\
    \Algo{OPLT-W}$_{w=5\%,p=3}$ & 58.51 & 37.92 & 28.04 & 249.8m & 4.6ms & 41.96 & 37.39 & 34.07 & 100.0m & 11.1ms \\
    \Algo{OPLT-W}$_{w=10\%,p=1}$ & 58.20 & 37.50 & 27.73 & 71.9m & 3.0ms & 39.71 & 35.03 & 31.57 & 36.6m & 9.1ms \\
    \Algo{OPLT-W}$_{w=10\%,p=3}$ & \colorbox{blue!30}{59.23} & \colorbox{blue!30}{38.39} & \colorbox{blue!30}{28.38} & \colorbox{blue!30}{205.7m} & 3.4ms & \colorbox{blue!30}{42.21} & \colorbox{blue!30}{37.60} & \colorbox{blue!30}{34.25} & \colorbox{blue!30}{98.3m} & 11.3ms \\
    \Algo{OPLT-W}$_{w=15\%,p=1}$ & 58.68 & 37.85 & 27.97 & 72.1m & 2.8ms & 40.11 & 35.35 & 31.84 & 34.0m & 8.3ms \\
    \Algo{OPLT-W}$_{w=15\%,p=3}$ & 59.66 & 38.67 & 28.57 & 196.0m & 3.1ms & 42.41 & 37.70 & 34.29 & 85.1m & 9.1ms \\

    \bottomrule
    \end{tabular}
    \end{center}
\end{table}

\section{Detailed results of \Algo{OPLT} and \Algo{CMT} on few-shot multi-class classification tasks}
\label{app:detail-results-fs}

In Table~\ref{tab:fs-results-detailes} we present additional results for experiments on few-shot multi-class classification. We report results on the test set after 1 and 3 passes. In addition to accuracy and train times, we also report test times after 1 and 3 passes over the training dataset.

\begin{table}[htbp!]
    \tabcolsep=3pt
    \caption{Mean accuracy of prediction (\%) and train and test CPU time of \Algo{CMT}, \Algo{OPLT} for few-shot multi-class classification tasks.
    Notation: Acc -- accuracy, $T$ -- CPU time, $N$ -- number of samples in test set, $R$ -- \Algo{Random} policy, $B$ -- \Algo{Best-greedy} policy, $p$ -- number of passes over train dataset. Highlighted values are presented in the main paper in Table~\ref{tab:fs-results}.}
    \label{tab:fs-results-detailes}
    \begin{center}
    \begin{tabular}{
        l 
        | r r r
        | r r r 
        | r r r
        | r r r 
    }
    \toprule 
    & \multicolumn{3}{c|}{ALOI} 
    & \multicolumn{3}{c|}{Wikipara 1-shot}
    & \multicolumn{3}{c|}{Wikipara 3-shot}
    & \multicolumn{3}{c}{Wikipara 5-shot}
    \\
    
    Algorithm 
    & \multicolumn{1}{c}{Acc.} & \multicolumn{1}{c}{$t_{train}$} & \multicolumn{1}{c|}{$t_{test}/N$}
    & \multicolumn{1}{c}{Acc.} & \multicolumn{1}{c}{$t_{train}$} & \multicolumn{1}{c|}{$t_{test}/N$}
    & \multicolumn{1}{c}{Acc.} & \multicolumn{1}{c}{$t_{train}$} & \multicolumn{1}{c|}{$t_{test}/N$}
    & \multicolumn{1}{c}{Acc.} & \multicolumn{1}{c}{$t_{train}$} & \multicolumn{1}{c}{$t_{test}/N$}
    \\
    \midrule

    \Algo{CMT}$_{p=1}$ & 17.63 & 37.8s & 0.78ms & 2.22 & 7.1s & 0.51ms & 3.22 & 20.9s & 0.76ms & 4.15 & 81.1s & 3.22ms \\
    \Algo{OPLT}$_{R,p=1}$ & 62.58 & 6.0s & 0.12ms & 1.51 & 2.6s & 0.82ms & 13.31 & 6.7s & 1.83ms & 26.06 & 11.6s & 2.61ms \\
    \Algo{OPLT}$_{B,p=1}$ & 63.91 & 6.1s & 0.12ms & 0.80 & 2.2s & 0.55ms & 11.40 & 6.4s & 1.67ms & 23.12 & 10.3s & 2.48ms \\
    
    \midrule
    \Algo{CMT}$_{p=3}$ & \colorbox{blue!30}{71.98} & \colorbox{blue!30}{207s} & 0.57ms & 2.48 & 21.3s & 0.37ms & \colorbox{blue!30}{3.28} & \colorbox{blue!30}{63.1s} & 0.49ms & \colorbox{blue!30}{4.21} & \colorbox{blue!30}{240.9s} & 1.22ms \\
    \Algo{OPLT}$_{R,p=3}$ & \colorbox{blue!30}{66.50} & \colorbox{blue!30}{29.3s} & 0.11ms & 8.99 & 4.6s & 1.14ms & \colorbox{blue!30}{27.34} & \colorbox{blue!30}{16.4s} & 2.64ms & \colorbox{blue!30}{40.67} & \colorbox{blue!30}{27.6s} & 2.83ms \\
    \Algo{OPLT}$_{B,p=3}$ & \colorbox{blue!30}{67.26} & \colorbox{blue!30}{18.1s} & 0.10ms & 3.31 & 4.4s & 0.87ms & \colorbox{blue!30}{24.66} & \colorbox{blue!30}{15.6s} & 2.37ms & \colorbox{blue!30}{39.13} & \colorbox{blue!30}{27.2s} & 2.54ms \\

    \bottomrule
    \end{tabular}
    \end{center}
\end{table}

\section{Performance of \Algo{OPLT} with \Algo{Best-greedy} policy and different $\alpha$ values}
\label{app:tree-alpha}

In Table~\ref{tab:xmlc-tree-alpha}, \Algo{OPLT} with \Algo{Best-greedy} policy and different values of tree balancing parameter $\alpha$ is compared to \Algo{IPLT} and \Algo{OPLT} with \Algo{Random} policy.
All other parameters of the model were set as described in Section~\ref{sec:xmlc-experiments}.
It shows that for $\alpha \ge 0.75$, \Algo{OPLT} achieves the tree depth close or the same as perfectly balanced tree build for \Algo{IPLT}, at the same time having the best predictive performance and the shortest training and prediction time among \Algo{OPLT} variants.

\begin{table}[htbp!]
    \tabcolsep=3pt
    \caption{Mean precision at $\{1,3,5\}$ (\%), train time, test time and tree depth (averaged over 5 runs) of \Algo{OPLT} with \Algo{Best-greedy} policy with different $\alpha$ values compared with \Algo{IPLT} and \Algo{OPLT} with \Algo{Random} policy. 
    Notation: P$@k$ -- precision at $k$-position, $t$ -- CPU time, d($T$) -- tree depth, $R$ -- \Algo{Random} policy, $B$ -- \Algo{Best-greedy} policy, $\alpha$ -- tree balancing parameter, percentage of the training dataset sampled for warm-start, $p$ -- number of passes over train dataset.}
    \label{tab:xmlc-tree-alpha}
    \begin{center}
    \begin{tabular}{
        l 
        | r r r r r r
        | r r r r r r
    }

    \toprule 
    & \multicolumn{6}{c|}{AmazonCat}
    & \multicolumn{6}{c}{Wiki10}
    \\
    Algorithm
    & \multicolumn{1}{c}{P$@1$} & \multicolumn{1}{c}{P$@3$} & \multicolumn{1}{c}{P$@5$} & \multicolumn{1}{c}{$t_{train}$} & \multicolumn{1}{c}{$t_{test}/N$} & \multicolumn{1}{c|}{d($T$)}
    & \multicolumn{1}{c}{P$@1$} & \multicolumn{1}{c}{P$@3$} & \multicolumn{1}{c}{P$@5$} & \multicolumn{1}{c}{$t_{train}$} & \multicolumn{1}{c}{$t_{test}/N$} & \multicolumn{1}{c}{d($T$)}\\
    \midrule 
    \Algo{IPLT}$_{p=3}$ & 93.10 & 78.52 & 63.81 & 115.7m & 0.6ms & 10.0 & 83.49 & 73.06 & 64.12 & 46.8m & 10.4ms & 11.0 \\
    \Algo{OPLT}$_{R,p=3}$ & 92.65 & 77.43 & 62.59 & 278.3m & 1.4ms & 10.0 & 83.09 & 72.25 & 63.33 & 86.5m & 18.2ms & 11.0 \\
    \midrule
    \Algo{OPLT}$_{B,\alpha=0.25,p=3}$ & 92.77 & 77.77 & 62.99 & 303.1m & 1.5ms & 30.0 & 82.93 & 72.41 & 63.38 & 88.2m & 20.0ms & 24.6 \\
    \Algo{OPLT}$_{B,\alpha=0.375,p=3}$ & 92.80 & 77.72 & 62.93 & 250.2m & 1.5ms & 18.0 & 83.10 & 72.43 & 63.37 & 84.6m & 18.9ms & 17.2 \\
    \Algo{OPLT}$_{B,\alpha=0.5,p=3}$ & 92.79 & 77.74 & 62.95 & 257.6m & 1.4ms & 14.0 & 83.09 & 72.48 & 63.45 & 86.0m & 18.7ms & 14.0 \\
    \Algo{OPLT}$_{B,\alpha=0.625,p=3}$ & 92.77 & 77.74 & 62.96 & 245.2m & 1.4ms & 12.0 & 83.25 & 72.50 & 63.48 & 87.3m & 19.6ms & 12.6 \\
    \Algo{OPLT}$_{B,\alpha=0.75,p=3}$ & 92.77 & 77.70 & 62.93 & 248.9m & 1.4ms & 11.0 & 83.39 & 72.50 & 63.48 & 86.7m & 18.2ms & 12.0 \\
    \Algo{OPLT}$_{B,\alpha=0.875,p=3}$ & 92.73 & 77.64 & 62.85 & 207.7m & 1.2ms & 10.0 & 83.41 & 72.50 & 63.44 & 82.2m & 17.6ms & 11.0 \\
    
    \toprule 
    & \multicolumn{6}{c|}{WikiLSHTC}
    & \multicolumn{6}{c}{Amazon}
    \\
    
    Algorithm
    & \multicolumn{1}{c}{P$@1$} & \multicolumn{1}{c}{P$@3$} & \multicolumn{1}{c}{P$@5$} & \multicolumn{1}{c}{$t_{train}$} & \multicolumn{1}{c}{$t_{test}/N$} & \multicolumn{1}{c|}{d($T$)}
    & \multicolumn{1}{c}{P$@1$} & \multicolumn{1}{c}{P$@3$} & \multicolumn{1}{c}{P$@5$} & \multicolumn{1}{c}{$t_{train}$} & \multicolumn{1}{c}{$t_{test}/N$} & \multicolumn{1}{c}{d($T$)}
    \\
    \midrule 
    \Algo{IPLT}$_{p=3}$ & 60.80 & 39.58 & 29.24 & 175.1m & 2.6ms & 14.0 & 43.55 & 38.69 & 35.20 & 79.4m & 6.2ms & 15.0 \\
    \Algo{OPLT}$_{R,p=3}$ & 47.76 & 30.97 & 23.37 & 330.1m & 7.9ms & 14.8 & 38.42 & 34.33 & 31.32 & 134.2m & 18.2ms & 16.0 \\
    \midrule
    \Algo{OPLT}$_{B,\alpha=0.25,p=3}$ & 53.14 & 34.36 & 25.69 & 337.8m & 6.3ms & 68.4 & 38.83 & 34.72 & 31.69 & 139.2m & 16.4ms & 38.4 \\
    \Algo{OPLT}$_{B,\alpha=0.375,p=3}$ & 53.47 & 34.57 & 25.81 & 309.1m & 6.9ms & 30.8 & 39.17 & 35.04 & 31.99 & 126.6m & 16.2ms & 21.2 \\
    \Algo{OPLT}$_{B,\alpha=0.5,p=3}$ & 53.77 & 34.74 & 25.93 & 294.3m & 6.0ms & 21.8 & 39.47 & 35.26 & 32.17 & 123.5m & 14.1ms & 17.8 \\
    \Algo{OPLT}$_{B,\alpha=0.625,p=3}$ & 54.01 & 34.90 & 26.04 & 298.7m & 6.1ms & 18.0 & 40.06 & 35.79 & 32.65 & 118.9m & 14.0ms & 16.0 \\
    \Algo{OPLT}$_{B,\alpha=0.75,p=3}$ & 54.69 & 35.32 & 26.31 & 297.0m & 5.8ms & 16.0 & 41.09 & 36.65 & 33.42 & 111.9m & 13.0ms & 16.0 \\
    \Algo{OPLT}$_{B,\alpha=0.875,p=3}$ & 54.56 & 35.22 & 26.25 & 285.4m & 5.8ms & 15.0 & 41.18 & 36.75 & 33.53 & 110.1m & 12.7ms & 15.0 \\

    \bottomrule
    \end{tabular}
    \end{center}
\end{table}

\section{Performance analysis of \Algo{OPLT} with warm-start}
\label{app:warm-start}

In Table~\ref{tab:xmlc-warm-start}, we compare \Algo{OPLT} with warm-start (\Algo{OPLT-W}) with two variants of \Algo{IPLT}. In the first variant, \Algo{IPLT} is trained only on the warm-up training dataset, and in the second variant, \Algo{IPLT} uses a tree created on warm-up, but then updated with all examples from the training set but without updating the tree structure. In both variants, \Algo{IPLT} cannot predict labels that are not present in the initial warm-up dataset.
All other parameters of the model were set as described in Section~\ref{sec:xmlc-experiments}.
This experiment shows the significant gain in predictive performance on WikiLSHTC and Amazon datasets by extending a tree with newly observed labels over the \Algo{IPLT} variants that do not take new labels into account.

\begin{table}[htbp!]
    \tabcolsep=4pt
    \caption{Mean precision at $\{1,3,5\}$ (\%, averaged over 5 runs) of \Algo{IPLT} trained only on warm-start training dataset and \Algo{IPLT} with a tree created on warm-start training dataset (\Algo{IPLT-U}) and updated with all examples without updating the tree and \Algo{OPLT} with warm-start (\Algo{OPLT-W}). 
    Notation: P$@k$ -- precision at $k$-position, $w$ -- percentage of the training dataset sampled for warm-start, $p$ -- number of passes over train dataset.}
    \label{tab:xmlc-warm-start}
    \begin{center}
    \begin{tabular}{
        l 
        | r r r
        | r r r
        | r r r
        | r r r
    }
    \toprule 
    & \multicolumn{3}{c|}{AmazonCat}
    & \multicolumn{3}{c|}{Wiki10}
    & \multicolumn{3}{c|}{WikiLSHTC} 
    & \multicolumn{3}{c}{Amazon} 
    \\
    
    Algorithm 
    & \multicolumn{1}{c}{P$@1$} & \multicolumn{1}{c}{P$@3$} & \multicolumn{1}{c|}{P$@5$} 
    & \multicolumn{1}{c}{P$@1$} & \multicolumn{1}{c}{P$@3$} & \multicolumn{1}{c|}{P$@5$}
    & \multicolumn{1}{c}{P$@1$} & \multicolumn{1}{c}{P$@3$} & \multicolumn{1}{c|}{P$@5$}
    & \multicolumn{1}{c}{P$@1$} & \multicolumn{1}{c}{P$@3$} & \multicolumn{1}{c}{P$@5$}
    \\
    \midrule
    
    \Algo{IPLT}$_{w=5\%,p=3}$ & 88.49 & 70.48 & 55.49 & 80.69 & 61.68 & 51.85 & 38.95 & 22.74 & 16.37 & 10.98 & 9.55 & 8.77 \\
    \Algo{IPLT-U}$_{w=5\%,p=3}$ & 93.15 & 78.54 & 63.62 & 83.95 & 72.99 & 63.20 & 54.70 & 34.54 & 25.27 & 29.54 & 22.65 & 18.22 \\
    \Algo{OPLT-W}$_{w=5\%,p=3}$ & 93.11 & 78.50 & 63.75 & 83.93 & 73.34 & 63.75 & 58.51 & 37.92 & 28.04 & 41.96 & 37.39 & 34.07 \\
    \midrule
    \Algo{IPLT}$_{w=10\%,p=3}$ & 89.92 & 72.97 & 57.99 & 80.79 & 65.46 & 55.38 & 44.59 & 26.71 & 19.35 & 15.77 & 13.70 & 12.45 \\
    \Algo{IPLT-U}$_{w=10\%,p=3}$ & 93.12 & 78.58 & 63.79 & 84.38 & 73.52 & 63.88 & 57.47 & 36.81 & 27.05 & 35.41 & 28.84 & 24.06 \\
    \Algo{OPLT-W}$_{w=10\%,p=3}$ & 93.10 & 78.51 & 63.79 & 84.75 & 74.02 & 64.33 & 59.23 & 38.39 & 28.38 & 42.21 & 37.60 & 34.25 \\
    \midrule
    \Algo{IPLT}$_{w=15\%,p=3}$ & 90.69 & 74.17 & 59.25 & 81.61 & 67.28 & 57.18 & 47.63 & 29.03 & 21.11 & 19.38 & 16.79 & 15.24 \\
    \Algo{IPLT-U}$_{w=15\%,p=3}$ & 93.13 & 78.62 & 63.85 & 84.65 & 73.93 & 64.24 & 58.55 & 37.71 & 27.77 & 37.90 & 31.87 & 27.19 \\
    \Algo{OPLT-W}$_{w=15\%,p=3}$ & 93.15 & 78.54 & 63.82 & 84.69 & 73.90 & 64.29 & 59.66 & 38.67 & 28.57 & 42.41 & 37.70 & 34.29 \\

    \bottomrule
    \end{tabular}
    \end{center}
\end{table}

\end{document}